\def\fps@figure{htbp}
\theoremstyle{plain}
\newtheorem{theorem}{Theorem}[section]
\newtheorem{lemma}[theorem]{Lemma}
\theoremstyle{definition}
\theoremstyle{remark}
\newtheorem{remark}[theorem]{Remark}
\DeclareMathOperator*{\argmin}{arg\,min}
\DeclareMathOperator{\sign}{sign}
\title{The Hessian Screening Rule}
\author{%
    Johan Larsson\\
    Department of Statistics\\
    Lund University\\
    \texttt{johan.larsson@stat.lu.se} \\
    \And
    Jonas Wallin\\
    Department of Statistics\\
    Lund University\\
    \texttt{jonas.wallin@stat.lu.se} \\
}
\begin{document}

\maketitle

\begin{abstract}
  Predictor screening rules, which discard predictors before fitting a model,
  have had considerable impact on the speed with which sparse regression
  problems, such as the lasso, can be solved. In this paper we present a new
  screening rule for solving the lasso path: the Hessian Screening Rule. The
  rule uses second-order information from the model to provide both effective
  screening, particularly in the case of high correlation, as well as
  accurate warm starts. The proposed rule outperforms all alternatives we
  study on simulated data sets with both low and high correlation for
  \(\ell_1\)-regularized least-squares (the lasso) and logistic regression.
  It also performs best in general on the real data sets that we examine.
\end{abstract}

\section{Introduction}
\label{sec:introduction}

High-dimensional data, where the number of features (\(p\)) exceeds the number
of observations (\(n\)), poses a challenge for many classical statistical
models. A common remedy for this issue is to regularize the model by penalizing
the regression coefficients such that the solution becomes sparse. A popular
choice of such a penalization is the \(\ell_1\)-norm, which, when the objective
is least-squares, leads to the well-known lasso~\autocite{tibshirani1996}. More
specifically, we will focus on the following convex optimization problem:
\begin{equation}
  \operatorname*{minimize}_{\beta \in \mathbb{R}^p}
  \big\{ f(\beta; X) + \lambda\lVert \beta\rVert_1 \big\},
  \label{eq:primal}
\end{equation}
where \(f(\beta; X)\) is smooth and convex. We let \(\hat\beta\) be the solution
vector for this problem and, abusing notation, equivalently let \(\hat\beta :
\mathbb{R} \mapsto \mathbb{R}^p\) be a function that returns this vector for a
given \(\lambda\). Our focus lies in solving~\eqref{eq:primal}
along a regularization path \(\lambda_1,\lambda_2\,\dots, \lambda_m\) with
\(\lambda_1 \geq \lambda_2 \geq \cdots \geq \lambda_m\). We start the path at
\(\lambda_\text{max}\), which corresponds to the null (all-sparse)
model\footnote{\(\lambda_\text{max}\) is in fact available in closed form---for
  the lasso it is \(\max_j | x_j^T y|\).}, and finish at some fraction of
\(\lambda_\text{max}\) for which the model is either almost saturated (in the
\(p \geq n\) setting), or for which the solution approaches the ordinary
least-squares estimate. The motivation for this focus is that the optimal
\(\lambda\) is typically unknown and must be estimated through model tuning,
such as cross-validation. This involves repeated refitting of the model to new
batches of data, which is computationally demanding.

Fortunately, the introduction of so-called \emph{screening rules} has improved
this situation remarkably. Screening rules use tests that screen and possibly
discard predictors from the model \emph{before} it is fit, which effectively
reduces the dimensions of the problem and leads to improvements in performance
and memory usage. There are, generally speaking, two types of screening rules:
\emph{safe} and \emph{heuristic} rules. Safe rules guarantee that discarded
predictors are inactive at the optimum---heuristic rules do not and may
therefore cause violations: discarding active predictors. The possibility of
violations mean that heuristic methods need to validate the solution through
checks of the Karush--Kuhn--Tucker (KKT) optimality conditions after
optimization has concluded and, whenever there are violations, re-run
optimization, which can be costly particularly because the KKT checks
themselves are expensive. This means that the distinction between safe and
heuristic rules only matters in regards to algorithmic details---all heuristic
methods that we study here use KKT checks to catch these violations, which
means that these methods are in fact also safe.

Screening rules can moreover also be classified as \emph{basic},
\emph{sequential}, or \emph{dynamic}. Basic rules screen predictors based only
on information available from the null model. Sequential rules use information
from the previous step(s) on the regularization path to screen predictors for
the next step. Finally, dynamic rules screen predictors during optimization,
reducing the set of screened predictors repeatedly throughout optimization.

Notable examples of safe rules include the basic SAFE
rule~\autocite{elghaoui2010}, the sphere tests~\autocite{xiang2011}, the
R-region test~\autocite{xiang2012}, Slores~\autocite{wang2014},
Gap~Safe~\cite{fercoq2015,ndiaye2017}, and Dynamic Sasvi~\autocite{yamada2021}.
There is also a group of dual polytope projection rules, most prominently
Enhanced Dual Polytope Projection (EDPP)~\autocite{wang2015}. As noted
by~\textcite{fercoq2015}, however, the sequential version of EDPP relies on
exact knowledge of the optimal solution at the previous step along the path to
be safe in practice, which is only available for \(\lambda_\text{max}\). Among
the heuristic rules, we have the Strong Rule~\autocite{tibshirani2012},
SIS~\cite{fan2008}, and ExSIS~\autocite{ahmed2019}. But the latter two of these
are not sequential rules and solve a potentially reduced form of the problem
in~\eqref{eq:primal}---we will not discuss them further here. In addition to
these two types of rules, there has also recently been attempts to combine safe
and heuristic rules into so-called hybrid rules~\autocite{zeng2021}.

There are various methods for employing these rules in practice. Of particular
interest are so-called \emph{working set} strategies, which use a subset of the
screened set during optimization, iteratively updating the set based on some
criterion. \textcite{tibshirani2012} introduced the first working set strategy,
which we in this paper will refer to simply as the \emph{working set strategy}.
It uses the set of predictors that have ever been active as an initial working
set. After convergence on this set, it checks the KKT optimality conditions on
the set of predictors selected by the strong rule, and then adds predictors
that violate the conditions to the working set. This procedure is then repeated
until there are no violations, at which point the optimality conditions are
checked for the entire set, possibly triggering additional iterations of the
procedure. Blitz~\autocite{johnson2015} and Celer~\cite{massias2018} are two
other methods that use both Gap Safe screening and working sets. Instead of
choosing previously active predictors as a working set, however, both Blitz and
Celer assign priorities to each feature based on how close each feature is of
violating the Gap Safe check and construct the working set based on this
prioritization. In addition to this, Celer uses dual point acceleration
to improve Gap Safe screening and speed up convergence. Both Blitz and Celer
are heuristic methods.

One problem with current screening rules is that they often become
conservative---including large numbers of predictors into the screened
set---when dealing with predictors that are strongly correlated.
\textcite{tibshirani2012}, for instance, demonstrated this to be the case with
the strong rule, which was the motivation behind the working set strategy. (See
\cref{sec:effectiveness-and-violations}
for additional experiments verifying
this). Yet because the computational complexity of the KKT checks in the
working set strategy still depends on the strong rule, the effectiveness of the
rule may nevertheless be hampered in this situation. A possible and---as we
will soon show---powerful solution to this problem is to make use of the
second-order information available from~\eqref{eq:primal}, and in this paper we
present a novel screening rule based on this idea. Methods using second-order
information (the Hessian) are often computationally infeasible for
high-dimensional problems. We utilize two properties of the problem to remedy
this issue: first, we need only to compute the Hessian for the active set,
which is often much smaller than the full set of predictors. Second, we avoid
constructing the Hessian (and it's inverse) from scratch for each \(\lambda\)
along the path, instead updating it sequentially by means of the Schur
complement. The availability of the Hessian also enables us to improve the warm
starts (the initial coefficient estimate at the start of each optimization run)
used when fitting the regularization path, which plays a key role in our
method.

We present our main results in \cref{sec:main-result}, beginning with a
reformulation of the strong rule and working set strategy before we arrive at
the screening rule that represents the main result of this paper. In
\cref{sec:experiments}, we present numerical experiments on simulated and real
data to showcase the effectiveness of the screening rule, demonstrating that
the rule is effective both when \(p \gg n\) and \(n \gg p\), out-performing the
other alternatives that we study. Finally, in \cref{sec:discussion} we wrap up
with a discussion on these results, indicating possible ways in which they may
be extended.

\section{Preliminaries}
\label{sec:preliminaries}

We use lower-case letters to denote scalars and vectors and upper-case letters
for matrices. We use \(\boldsymbol{0}\) and \(\boldsymbol{1}\) to denote
vectors with elements all equal to 0 or 1 respectively, with dimensions
inferred from context. Furthermore, we let \(\sign\) be the standard signum
function with domain \(\{-1,0,1\}\), allowing it to be overloaded for vectors.

Let \(c(\lambda) \coloneqq -\nabla_{\beta} f\big(\hat\beta(\lambda);X\big)\) be
the negative gradient, or so-called \emph{correlation}, and denote
\(\mathcal{A}_{\lambda}= \{i: |c(\lambda)_i|>\lambda \}\) as the \emph{active
  set} at \(\lambda\): the support set of the non-zero regression coefficients
corresponding to \(\hat\beta (\lambda)\). In the interest of brevity, we will
let \(\mathcal{A} \coloneqq \mathcal{A}_{\lambda}\). We will consider \(\beta\)
a solution to~\eqref{eq:primal} if it satisfies the stationary criterion
\begin{equation}
  \label{eq:stationarity}
  \boldsymbol{0} \in \nabla_{\beta} f(\beta;X) + \lambda \partial.
\end{equation}
Here \(\partial\) is the subdifferential of \(\lVert \beta \rVert_1\),
defined as
\[
  \partial_j \in
  \begin{cases}
    \{\sign(\hat\beta_j)\} & \text{if } \hat\beta_j \neq 0, \\
    [-1, 1]                & \text{otherwise.}
  \end{cases}
\]
This means that there must be a \(\tilde\partial \in \partial\) for a given
\(\lambda\) such that
\begin{equation}
  \label{eq:subgrad-solution}
  \nabla_{\beta} f(\beta;X) + \lambda \tilde\partial = \boldsymbol{0}.
\end{equation}

\section{Main Results}
\label{sec:main-result}

In this section we derive the main result of this paper: the Hessian screening
rule. First, however, we now introduce a non-standard perspective on screening
rules. In this approach, we note that~\eqref{eq:stationarity} suggests a simple
and general formulation for a screening rule, namely: we substitute the
gradient vector in the optimality condition of a \(\ell_1\)-regularized problem
with an estimate. More precisely, we discard the \(j\)th predictor for the
problem at a given \(\lambda\) if the magnitude of the \(j\)th component of the
gradient vector estimate is smaller than this \(\lambda\), that is
\begin{equation} \label{eq:general-screening-rule} |\tilde c(\lambda)_j| <
  \lambda. \end{equation} In the following sections, we review the strong rule
and working set method for this problem from this perspective, that is, by
viewing both methods as gradient approximations. We start with the case of the
standard lasso (\(\ell_1\)-regularized least-squares), where we have \(
f(\beta;X) = \frac{1}{2} \lVert X\beta - y\rVert_2^2. \)

\subsection{The Strong Rule}
\label{sec:strong-rule}

The sequential strong rule for \(\ell_1\)-penalized least-squares
regression~\autocite{tibshirani2012}  discards the \(j\)th predictor at
\(\lambda=\lambda_{k+1}\) if
\[
  \big|x_j^T(X\hat\beta(\lambda_{k}) - y)\big| = |c(\lambda_k)_j| < 2
  \lambda_{k+1} - \lambda_{k}.
\]
This is equivalent to checking that
\begin{equation}
  \tilde{c}^S(\lambda_{k+1})=c(\lambda_{k}) +
  \left(\lambda_{k} - \lambda_{k+1} \right) \sign(c(\lambda_{k}))
  \label{eq:strong-gradient}
\end{equation}
satisfies~\eqref{eq:general-screening-rule}. The strong rule gradient
approximation~\eqref{eq:strong-gradient} is also known as the \emph{unit bound},
since it assumes the gradient of the correlation vector to be bounded by one.

\subsection{The Working Set Method}
\label{sec:working-set}

A simple but remarkably effective alternative to direct use of the strong rule
is the working set heuristic~\autocite{tibshirani2012}. It begins by estimating
\(\beta\) at the \((k+1)\)th step using only the coefficients that have been
previously active at any point along the path, i.e. \(\mathcal{A}_{1:k} =
\cup_{i=1}^k\mathcal{A}_i\). The working set method can be viewed as a gradient
estimate in the sense that
\[
  \tilde  c^W(\lambda_{k+1})
  = X^T \left(y-X_{\mathcal{A}_{1:k}} \tilde{\beta}(\lambda_{k+1},\mathcal{A}_{1:k}) \right)\\
  = -\nabla f\big(\tilde\beta(\lambda_{k+1},\mathcal{A}_{1:k});X\big),
\]
where \(\tilde{ \beta}_{}(\lambda,\mathcal{A})= \argmin_{\beta}
\frac12|| y - X_{\mathcal{A}}\beta||^2 + \lambda |\beta|\).

\subsection{The Hessian Screening Rule}
\label{sec:hessian-screening-rule}

We have shown that both the strong screening rule and the working set strategy
can be expressed as estimates of the correlation (negative gradient) for the
next step of the regularization path. As we have discussed previously, however,
basing this estimate on the strong rule can lead to conservative
approximations. Fortunately, it turns out that we can produce a better estimate
by utilizing second-order information.

We start by noting that~\eqref{eq:subgrad-solution}, in the case of the
standard lasso, can be formulated as
\[
  \begin{bmatrix}
    {X_\mathcal{A}^T}X_\mathcal{A}   & X_\mathcal{A}^TX_{\mathcal{A}^c}
    \\
    X_{\mathcal{A}^c}^TX_\mathcal{A} & X_{\mathcal{A}^c}^T X_{\mathcal{A}^c}
  \end{bmatrix}
  \begin{bmatrix}
    \hat\beta_\mathcal{A} \\ 0
  \end{bmatrix}
  +
  \lambda
  \begin{bmatrix}
    \sign(\hat\beta(\lambda)_\mathcal{A}) \\ \partial_{\mathcal{A}^c}
  \end{bmatrix}
  =
  \begin{bmatrix}
    X_\mathcal{A}^Ty \\ X_{\mathcal{A}^c}^T y
  \end{bmatrix},
\]
and consequently that
\begin{equation*}
  \hat\beta(\lambda)_\mathcal{A} = (X^T_\mathcal{A}
  X_\mathcal{A})^{-1}\big(X^T_\mathcal{A} y - \lambda
  \sign{(\hat\beta_\mathcal{A})}\big).
\end{equation*}
Note that, for an interval \([\lambda_l, \lambda_u]\) in which the active set
is unchanged, that is, \(\mathcal{A}_{\lambda} = \mathcal{A}\) for all
\(\lambda \in [\lambda_u,\lambda_k]\), then
\(\hat\beta(\lambda)\) is a continuous linear function in
\(\lambda\)~(\cref{thm:continuity})\footnote{This result is not a new
  discovery~\autocite{efron2004}, but is included here for convenience because
  the
  following results depend on it.}.

\begin{theorem}
  \label{thm:continuity}
  Let \(\hat\beta(\lambda)\) be the solution of~\eqref{eq:primal} where
  \(f(\beta;X)=\frac 12 \lVert X\beta - y \rVert_2^2\).
  Define
  \[
    \hat{\beta}^{\lambda^*}(\lambda)_{A_{\lambda^*}}
    = \hat\beta(\lambda^*)_{\mathcal{A}_{\lambda^*}}-
    \left(\lambda^* - \lambda \right) \left( X_{\mathcal{A}_{\lambda^*}}^T
    X_{\mathcal{A}_{\lambda^*}}\right)^{-1} \sign
    \big(\hat\beta(\lambda^*)_{\mathcal{A}_{\lambda^*}}\big)
  \]
  and \( \hat{\beta}^{\lambda^*}(\lambda)_{\mathcal{A}_{\lambda^*}^c} = 0.\)
  If it for \(\lambda \in [\lambda_0, \lambda^*]\) holds that
  (i) \(\sign\big(\hat{\beta}^{\lambda^*}(\lambda)\big) =
  \sign  \big(\hat\beta(\lambda^*)\big)\) and (ii)
  \(\max |\nabla f( \hat{\beta}^{\lambda^*}
  (\lambda))_{\mathcal{A}_{\lambda^*}}| < \lambda,\)
  then \(\hat\beta(\lambda)= \hat{\beta}^{\lambda^*}(\lambda)\) for \(\lambda
  \in [\lambda_0, \lambda^*]\).
\end{theorem}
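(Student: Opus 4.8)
The plan is to verify that the proposed path segment $\hat\beta^{\lambda^*}(\lambda)$ satisfies the stationarity condition~\eqref{eq:stationarity} for the lasso at parameter $\lambda$, and then to invoke convexity. Since $f(\beta;X)=\tfrac12\lVert X\beta-y\rVert_2^2$ and the penalty are both convex, every point satisfying~\eqref{eq:subgrad-solution} is a global minimizer, so it suffices to produce a subgradient $\tilde\partial\in\partial\lVert\,\cdot\,\rVert_1$ with $\nabla f\big(\hat\beta^{\lambda^*}(\lambda)\big)+\lambda\tilde\partial=\boldsymbol 0$, where $\nabla f(\beta)=X^T(X\beta-y)$. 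Writing $\mathcal{A}\coloneqq\mathcal{A}_{\lambda^*}$ and $s\coloneqq\sign\big(\hat\beta(\lambda^*)_{\mathcal{A}}\big)$, I would split this vector equation into its active block on $\mathcal{A}$ and its inactive block on $\mathcal{A}^c$ and check each separately.

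For the active block, hypothesis~(i) guarantees $\sign\big(\hat\beta^{\lambda^*}(\lambda)_{\mathcal{A}}\big)=s$, so the only admissible choice of subgradient there is $\tilde\partial_{\mathcal{A}}=s$. Because $\hat\beta^{\lambda^*}(\lambda)$ vanishes off $\mathcal{A}$, only the active columns enter the fit and $\nabla f\big(\hat\beta^{\lambda^*}(\lambda)\big)_{\mathcal{A}}=X_{\mathcal{A}}^T\big(X_{\mathcal{A}}\hat\beta^{\lambda^*}(\lambda)_{\mathcal{A}}-y\big)$. I would then substitute the definition of $\hat\beta^{\lambda^*}(\lambda)_{\mathcal{A}}$, cancel $X_{\mathcal{A}}^T X_{\mathcal{A}}$ against the inverse Gram factor, and use that $\hat\beta(\lambda^*)$ itself solves the stationarity equations at $\lambda^*$, namely $X_{\mathcal{A}}^T X_{\mathcal{A}}\hat\beta(\lambda^*)_{\mathcal{A}}=X_{\mathcal{A}}^T y-\lambda^* s$. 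The two contributions telescope to $X_{\mathcal{A}}^T X_{\mathcal{A}}\hat\beta^{\lambda^*}(\lambda)_{\mathcal{A}}=X_{\mathcal{A}}^T y-\lambda s$, so that $\nabla f\big(\hat\beta^{\lambda^*}(\lambda)\big)_{\mathcal{A}}=-\lambda s=-\lambda\tilde\partial_{\mathcal{A}}$; this is precisely what the linear slope term $(X_{\mathcal{A}}^T X_{\mathcal{A}})^{-1}s$ is engineered to achieve, and it settles the active block.

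For the inactive block the candidate coefficients are zero, so any valid subgradient entry must lie in $[-1,1]$, and stationarity forces $\tilde\partial_{\mathcal{A}^c}=-\nabla f\big(\hat\beta^{\lambda^*}(\lambda)\big)_{\mathcal{A}^c}/\lambda$. This lies in $[-1,1]$ precisely when $\max\big|\nabla f\big(\hat\beta^{\lambda^*}(\lambda)\big)_{\mathcal{A}^c}\big|\le\lambda$, which is hypothesis~(ii) (read on the inactive set $\mathcal{A}^c$, the intended index set, since on $\mathcal{A}$ the gradient already has magnitude exactly $\lambda$). Assembling the two blocks yields an admissible $\tilde\partial$, so $\hat\beta^{\lambda^*}(\lambda)$ satisfies~\eqref{eq:stationarity} and is a minimizer. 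Moreover, the \emph{strict} inequality in~(ii) forces every optimal solution to vanish on $\mathcal{A}^c$, and since $X_{\mathcal{A}}^T X_{\mathcal{A}}$ is invertible (hence $X_{\mathcal{A}}$ has full column rank) the minimizer is unique; therefore $\hat\beta(\lambda)=\hat\beta^{\lambda^*}(\lambda)$ on $[\lambda_0,\lambda^*]$.

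The algebra is routine, so the real content is bookkeeping: matching the two hypotheses to the two halves of the KKT system, with~(i) legitimizing $\tilde\partial_{\mathcal{A}}=s$ and~(ii) legitimizing $\tilde\partial_{\mathcal{A}^c}$. The step I would watch most carefully is the passage from ``a minimizer'' to ``the solution $\hat\beta(\lambda)$'', which is exactly why~(ii) is stated as a strict inequality, and I would confirm that the verification is genuinely pointwise in $\lambda$, so that no separate continuity or connectedness argument over the whole interval $[\lambda_0,\lambda^*]$ is needed beyond checking~(i) and~(ii) at each $\lambda$.
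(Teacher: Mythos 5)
Your proposal is correct and follows essentially the same route as the paper's proof: verify the KKT conditions for \(\hat{\beta}^{\lambda^*}(\lambda)\) blockwise, using the telescoping of the stationarity equations at \(\lambda^*\) to settle the active block and hypothesis (ii) to settle the inactive block (where you rightly read the index set in (ii) as \(\mathcal{A}_{\lambda^*}^c\)). Your added remarks on uniqueness of the minimizer and on the pointwise-in-\(\lambda\) nature of the verification go slightly beyond what the paper spells out, but do not change the argument.
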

See \cref{sec:proofs} for a full proof. Using
\cref{thm:continuity}, we have
the following second-order approximation of \(c(\lambda_{k+1})\):
\begin{equation}
  \label{eq:secondorder}
  \hat c^H(\lambda_{k+1}) =-\nabla f\big(\hat{\beta}^{\lambda_k}(\lambda_{k+1})_{A_{\lambda_k}}\big)= c(\lambda_k) + (\lambda_{k+1} - \lambda_{k})X^T X_{\mathcal{A}_k} (X_{\mathcal{A}_k}^TX_{\mathcal{A}_k})^{-1}\sign \big(\hat\beta (\lambda_k)_{\mathcal{A}_k}\big).
\end{equation}
\begin{remark}
  If no changes in the active set occur in
  \([\lambda_{k+1},\lambda_k]\), \eqref{eq:secondorder} is in fact an exact
  expression for the correlation at the next step,
  that is, \(\hat c^H(\lambda_{k+1})=c(\lambda_{k+1})\).
\end{remark}
One problem with using the gradient estimate in~\eqref{eq:secondorder} is that
it is expensive to compute due to the inner products involving the full design
matrix. To deal with this, we use the following modification, in which we
restrict the computation of these inner products to the set indexed by the
strong rule, assuming that predictors outside this set remain inactive:
\begin{equation*}
  \tilde c^H(\lambda_{k+1})_j \coloneqq
  \begin{cases}
    \lambda_{k+1}\sign{\hat{\beta}(\lambda_k)}_j
      & \mbox{if } j\in \mathcal{A}_{\lambda_k},                                                                 \\
    0 & \mbox{if } |\tilde c^S(\lambda_{k+1})_j| < \lambda_{k+1}{\mbox{ and } j \notin \mathcal{A}_{\lambda_k}}, \\
    \hat c^H(\lambda_{k+1})_j
      & \mbox{else}.
  \end{cases}
\end{equation*}
For high-dimensional problems, this modification leads to large
computational gains and seldom proves inaccurate, given that
the strong rule only rarely causes violations~\autocite{tibshirani2012}.
Lastly, we make one more adjustment to the rule, which is to add
a proportion of the unit bound (used in the strong rule) to the
gradient estimate:
\begin{equation*}
  \check c^{H}(\lambda_{k+1})_j \coloneqq \tilde c^H(\lambda_{k+1})_j +
  \gamma (\lambda_{k+1}-\lambda_k)\sign(c(\lambda_{k})_j),
\end{equation*}
where \(\gamma \in \mathbb{R}_+\). Without this adjustment
there would be no upwards bias on the estimate, which would cause
more violations than would be desirable.
In our experiments, we have used \(\gamma = 0.01\), which
has worked well for most problems we have encountered.
This finally leads us to the \emph{Hessian screening rule}: discard the
\(j\)th predictor at \(\lambda_{k+1}\) if \(|\check c^H(\lambda_{k+1})_j| <
\lambda_{k+1}\).

We make one more modification in our implementation of the Hessian Screening
Rule, which is to use the union of the ever-active predictors and those screened
by the screening rule as our final set of screened predictors. We note that this
is a marginal improvement to the rule, since violations of the rule are already
quite infrequent. But it is included nonetheless, given that it comes at no cost
and occasionally prevents violations.

As an example of how the Hessian Screening Rule performs, we examine the
screening performance of several different strategies. We fit a full
regularization path to a design with \(n=200\), \(p=20\,000\), and pairwise
correlation between predictors of \(\rho\). (See \cref{sec:experiments} and
\cref{sec:effectiveness-and-violations} for more information on the
setup.) We compute the average number of screened predictors across iterations
of the coordinate descent solver. The results are displayed in
\cref{fig:efficiency-simulated-small} and demonstrate that our method
gracefully handles high correlation among predictors, offering a screened set
that is many times smaller than those produced by the other screening
strategies. In \cref{sec:effectiveness-and-violations} we extend these
results to \(\ell_1\)-regularized logistic regression as well and report the
frequency of violations.

\begin{figure}[hbtp]
  \centering
  \includegraphics{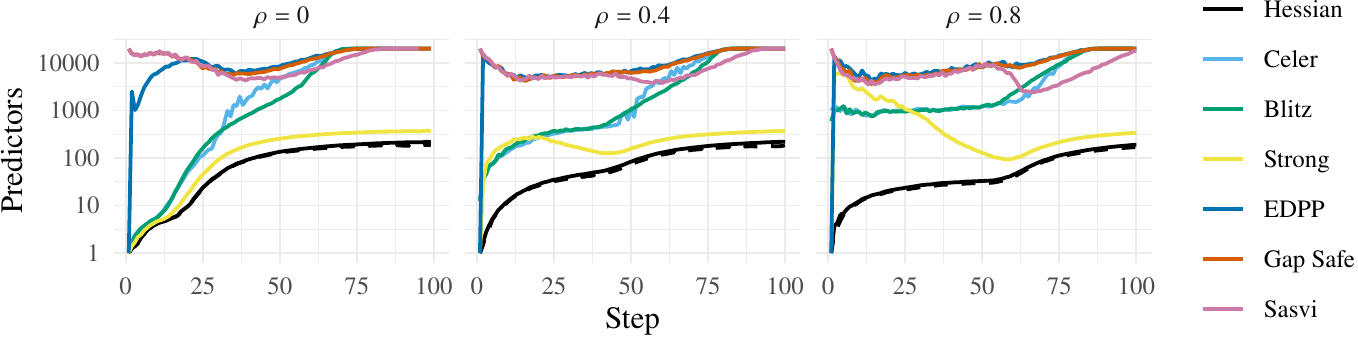}
  \caption{%
    The number of predictors screened (included) for
    when fitting a regularization path of \(\ell_1\)-regularized
    least-squares to a design with varying correlation (\(\rho\)),
    \(n = 200\), and \(p = 20000\). The values are averaged over
    20 repetitions. The minimum number of active
    predictors at each step across iterations is given as a dashed line.
    Note that the y-axis is on a \(\log_{10}\) scale.
    \label{fig:efficiency-simulated-small}
  }
\end{figure}

Recall that the Strong rule bounds its gradient of the correlation vector
estimate at one. For the Hessian rule, there is no such bound. This means that
it is theoretically possible for the Hessian rule to include more predictors
than the Strong rule\footnote{The chance of this happening is tied to the
setting of \(\gamma\).}. In fact, it is even possible to design special cases
where the Hessian rule could be more conservative than the Strong rule. In
practice, however, we have not encountered any situation in which this is the
case.

\subsubsection{Updating the Hessian}
\label{sec:hessian-updates}

A potential drawback to using the Hessian screening rule is the computational
costs of computing the Hessian and its inverse. Let \(\mathcal{A}_k\) be the
active set at step \(k\) on the lasso path. In order to use the Hessian
screening rule we need \(H^{-1}_{k}={(X_{\mathcal{A}_{k}}^T
X_{\mathcal{A}_{k}})}^{-1}\). Computing \({(X_{\mathcal{A}_{k}}^T
    X_{\mathcal{A}_{k}})}^{-1}\) directly, however, has numerical complexity
\(O(|\mathcal{A}_{k}|^3+ |\mathcal{A}_{k}|^2n)\). But if we have stored
\((H^{-1}_{k-1},H_{k-1})\) previously, we can utilize it to compute
\((H_k^{-1},H_k)\) more efficiently via the so-called sweep
operator~\parencite{goodnight1979}. We outline this technique in
\cref{alg:hessian-update}~(\cref{sec:algorithms}). The algorithm has a reduction step
and an augmentation step; in the reduction step, we reduce the Hessian and its
inverse to remove the presence of any predictors that are no longer active. In
the augmentation step, we update the Hessian and its inverse to account for
predictors that have just become active.

The complexity of the steps depends on the size of the
sets \(\mathcal{C}=\mathcal{A}_{k-1}\setminus
\mathcal{A}_{k}\),\(\mathcal{D} = \mathcal{A}_{k}\setminus \mathcal{A}_{k-1}\),
and \(\mathcal{E} = \mathcal{A}_{k}\cap \mathcal{A}_{k-1}\)
The complexity of the reduction step is
\(O(|\mathcal{C}|^{3} + |\mathcal{C}|^{2}|\mathcal{E}| +
|\mathcal{C}| |\mathcal{E}|^2)\) and the complexity of the augmentation step is
\(O(|\mathcal{D}|^2 n  + n|\mathcal{D}||\mathcal{E}| +
|\mathcal{D}|^2|\mathcal{E}| + |\mathcal{D}|^3 )\)
since \(n \geq \max(|\mathcal{E}|,|\mathcal{D}|)\).
An iteration of \cref{alg:hessian-update} therefore has complexity
\(O(|\mathcal{D}|^2 n
+ n|\mathcal{D}||\mathcal{E}| +|\mathcal{C}|^3 + |\mathcal{C}|
|\mathcal{E}|^2)\).

In most applications, the computationally dominant term will be
\(n|\mathcal{D}||\mathcal{E}|\) (since, typically,
\(n>|\mathcal{E}|>\mathcal{D}>\mathcal{C}\)) which could be compared to
evaluating the gradient for \(\beta_{\mathcal{A}_k}\), which is \(n
\left(|\mathcal{D}| + |\mathcal{E}|\right)\) when
\(\beta_{\mathcal{A}^c_k}=0\).
Note that we have so far assumed that the inverse of the Hessian exists, but
this need not be the case. To deal with this issue we precondition the
Hessian. See \cref{sec:nullspace} for details.

\subsubsection{Warm Starts}
\label{sec:warm-starts}

The availability of the Hessian and its inverse
offers a coefficient warm start that is more accurate than the
standard, naive, approach of using the estimate from the previous step.
With the Hessian screening rule, we use the following warm start.
\begin{equation}
  \label{eq:warm-start}
  \hat\beta(\lambda_{k+1})_{\mathcal{A}_k} \coloneqq
  \hat\beta(\lambda_{k})_{\mathcal{A}_k} +
  (\lambda_k - \lambda_{k+1}) H_{\mathcal{A}_k}^{-1}
  \sign\big(\hat\beta(\lambda_k)_{\mathcal{A}_k}\big),
\end{equation}
where \(H_{\mathcal{A}_k}^{-1}\) is the Hessian matrix for the
differentiable part of the objective. Our warm start is equivalent to
the one used in \textcite{park2007}, but is here made much more efficient due
due to the efficient updates of the Hessian and its inverse that we use.
\begin{remark}
  The warm start given by~\eqref{eq:warm-start} is the exact
  solution at \(\lambda_k\) if the active set remains constant
  in \([\lambda_{k+1},\lambda_{k}]\).
\end{remark}
As a first demonstration of the value of this warm start, we look at
two data sets: \emph{YearPredicitionMSD} and \emph{colon-cancer}. We fit
a full regularization path using the setup as outlined in
\cref{sec:experiments}, with or without Hessian warm starts. For
YearPredictionMSD we use the standard lasso, and for colon-cancer
\(\ell_1\)-regularized logistic regression.

The Hessian warm starts offer sizable reductions in the number of passes of the
solver~(\cref{fig:warm-starts}), for many steps requiring only a single pass to
reach convergence. On inspection, this is not a surprising find. There
are no changes in the active set for many of these steps, which means that the
warm start is almost exact---``almost'' due to the use of a preconditioner for
the Hessian (see \cref{sec:nullspace}).

\begin{figure}[hbtp]
  \centering
  \includegraphics{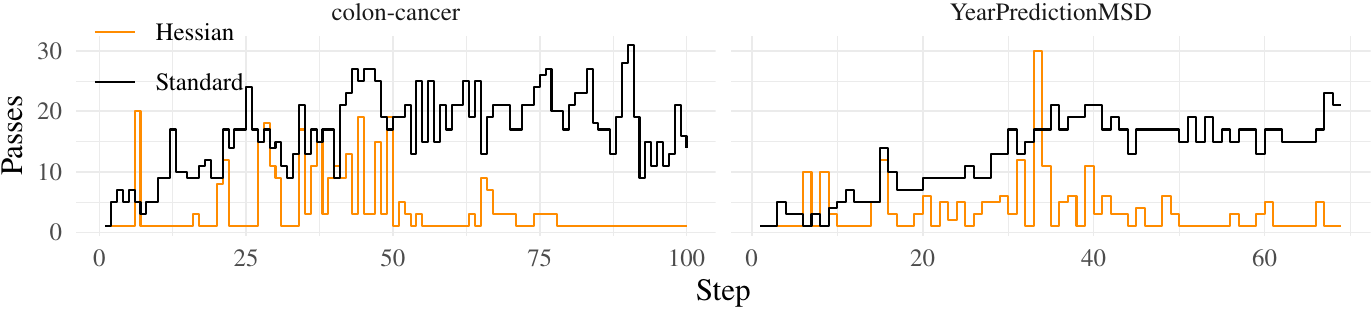}
  \caption{%
    Number of passes of coordinate descent along a full regularization path
    for the \emph{colon-cancer} (\(n = 62\), \(p = 2\,000\)) and
    \emph{YearPredictionMSD} (\(n = 463\,715\), \(p = 90\)) data sets,
    using either Hessian warm starts~\eqref{eq:warm-start} or standard warm
    starts (the solution from the previous step).\label{fig:warm-starts}
  }
\end{figure}

\subsubsection{General Loss Functions}
\label{sec:general-loss-functions}

We now venture beyond the standard lasso and consider loss functions of the
form
\begin{equation}
  \label{eq:gloss}
  f(\beta; X)= \sum_{i=1}^n f_i(x_i^T\beta)
\end{equation}
where \(f_i\) is convex and twice differentiable. This, for
instance, includes logistic, multinomial, and Poisson loss functions. For the
strong rule and working set strategy, this extension does not
make much of a difference. With the Hessian screening rule, however, the
situation is different.

To see this, we start by noting that our method involving the Hessian
is really a quadratic Taylor approximation of~\eqref{eq:primal} around a
specific point \(\beta_0\).
For loss functions of the
type~\eqref{eq:gloss}, this approximation
is equal to
\[
  \begin{aligned}
    Q(\beta,\beta_0) & = f(\beta_0; X) + \sum_{i=1}^n
    \Bigg(x^T_if'_i(x_i^T\beta_0)(\beta -\beta_0) + \frac{1}{2}(\beta -\beta_0)^T x_i^T f''_i(x_i^T\beta_0) x_i (\beta -\beta_0) \Bigg) \\
                     & = \frac{1}{2}\left(\tilde y(x_i^T\beta_0) -
      X\beta \right)^T  D\left(w(\beta_0) \right)
    \left( \tilde y(x_i^T\beta_0) - X\beta \right) +                                                                                    C(\beta_0),
  \end{aligned}
\]
where \(D(w(\beta_0))\) is a diagonal matrix with diagonal entries
\(w(\beta_0)\) where \(w(\beta_0)_i = f''(x ^T_i\beta_0)\) and
\( \tilde y(z)_i =f'_i(z) \big / f''_i(z) - x_i^T\beta_0\), whilst
\(C(\beta_0)\) is a constant with respect to \(\beta\).

Suppose that we are on the lasso path at \(\lambda_k\) and want to
approximate \(c(\lambda_{k+1})\). In this case, we simply replace
\(f(\beta;X)\) in~\eqref{eq:primal} with \(Q(\beta,\hat \beta(\lambda_{k}))\),
which leads to the following gradient approximation:
\[
  c^H(\lambda_{k+1}) =
  c(\lambda_k) + \left( \lambda_{k+1} -
  \lambda_{k}\right)X^T
  D(w) X_{\mathcal{A}_k}(X_{\mathcal{A}_k}^TD(w)X_{\mathcal{A}_k})^{-1}
  \sign \big(\hat\beta(\lambda_{k})_{\mathcal{A}_k}\big),
\]
where \(w =w\big(\hat{\beta}(\lambda_k) \big)\). Unfortunately, we cannot use
\cref{alg:hessian-update} to update
\(X_{\mathcal{A}_k}^T D(w) X_{\mathcal{A}_k}\). This means that we are forced
to either update the Hessian directly at each step, which can be
computationally demanding when \(|\mathcal{A}_k|\) is large and inefficient
when \(X\) is very sparse, or to approximate \(D(w)\) with an upper bound. In
logistic regression, for instance, we can use \nicefrac{1}{4} as such a bound,
which also means that we once again can use
\cref{alg:hessian-update}.

In our experiments, we have employed the following heuristic to
decide whether to use an upper bound or compute the full Hessian
in these cases: we use full updates at each
step if \(\operatorname{sparsity}(X) n / \max\{n,p\} < 10^{-3}\) and
the upper bound otherwise.

\subsubsection{Reducing the Impact of KKT Checks}

The Hessian Screening Rule is heuristic, which means there may be violations.
This necessitates that we verify the KKT conditions after having reached
convergence for the screened set of predictors, and add predictors back into the
working set for which these checks fail. When the screened set is small relative
to \(p\), the cost of optimization is often in large part consumed by these
checks. Running these checks for the full set of predictors always needs to be
done once, but if there are violations during this step, then we need repeat
this check, which is best avoided. Here we describe two methods to
tackle this issue.

We employ a procedure equivalent to the one used in \textcite{tibshirani2012} for
the working set strategy: we first check the KKT conditions for the set of
predictors singled out by the strong rule and then, if there are no violations
in that set, check the full set of predictors for violations. This works well
because the strong rule is conservative---violations are rare---which means that
we seldom need to run the KKT checks for the entire set more than once.

If we, in spite of the augmentation of the rule, run into violations when
checking the full set of predictors, that is, when the strong rule fails to
capture the active set, then we can still avoid repeating the full KKT check by
relying on Gap Safe screening: after having run the KKT checks and have failed
to converge, we screen the set of predictors using the Gap Safe rule. Because
this is a safe rule, we can be sure that the predictors we discard will be
inactive, which means that we will not need to include them in our upcoming KKT
checks. Because Gap Safe screening and the KKT checks rely on exactly the same
quantity---the correlation vector--we can do so at marginal extra cost. To see
how this works, we now briefly introduce Gap Safe screening. For details,
please see \textcite{fercoq2015}.

For the ordinary lasso (\(\ell_1\)-regularized least squares), the primal
\eqref{eq:primal} is
\(
P(\beta) =
\frac 1 2 \lVert y - X\beta\rVert_2^2 + \lambda \lVert \beta \rVert_1
\)
and the corresponding dual is
\begin{equation}
  \label{eq:dual}
  D(\theta) =
  \frac{1}{2} \lVert y \rVert_2^2 -
  \frac{\lambda^2}{2} \Big\lVert \theta -
  \frac{y}{\lambda}\Big\rVert_2^2
\end{equation}
subject to \(\lVert X^T \theta \rVert_\infty \leq 1\).
The duality gap is then \(G(\beta, \theta) =
P(\beta) - D(\theta)\)
and the relation between the primal and dual problems is given by \(y = \lambda
\hat\theta + X\hat\beta\), where \(\hat\theta\) is the maximizer
to the dual problem \eqref{eq:dual}.
In order to use Gap Safe screening, we need a feasible dual point, which can
be obtained via dual point scaling, taking
\(
\theta =
(y - X\beta) \big/ \max\big(\lambda, \lVert X^T(y - X\beta)\rVert_\infty\big).
\)
The Gap Safe screening rule then discards the \(j\)th feature if
\(
\lvert x_j^T\theta \rvert < 1 - \lVert x_j \rVert_2
\sqrt{2G(\beta,\theta)/\lambda^2}.
\)
Since we have computed \(X^T(y - X\beta)\) as part of the KKT
checks, we can perform Gap Safe screening at an additional (and marginal)
cost amounting to \(O(n) + O(p)\).

Since this augmentation benefits the working set strategy too, we adopt it
in our implementation of this method as well. To avoid
ambiguity, we call this version working+. Note that this makes the working
set strategy quite similar to Blitz. In
\cref{sec:ablation} we show the benefit of adding this type of screening.

\subsubsection{Final Algorithm}

The Hessian screening method is presented in full in
\cref{alg:hessian-screening}~(\cref{sec:algorithms}).

\begin{lemma}
  Let \(\beta \in \mathbb{R}^{p \times m}\) be the output of
  \cref{alg:hessian-screening} for a path of length \(m\) and convergence
  threshold \(\varepsilon > 0\). For each step \(k\) along the path and
  corresponding solution \(\beta^{(k)} \in \mathbb{R}^p\), there is a
  dual-feasible point \(\theta^{(k)}\) such that \(G(\beta^{(k)}, \theta^{(k)})
  < \zeta \varepsilon.\)
\end{lemma}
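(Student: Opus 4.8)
The plan is to exhibit, for each path step $k$, the dual point produced by dual-point scaling at termination, and to show that the gap it certifies on the \emph{full} problem inherits whatever bound the solver's stopping rule guarantees on the working set. Write $r^{(k)} = y - X\beta^{(k)}$ for the residual at termination of step $k$ and set
\[
  \theta^{(k)} = \frac{r^{(k)}}{\max\big(\lambda_k,\, \lVert X^T r^{(k)}\rVert_\infty\big)}.
\]
First I would record that this point is dual-feasible by construction: the denominator is at least $\lVert X^T r^{(k)}\rVert_\infty$, so $\lVert X^T \theta^{(k)}\rVert_\infty \le 1$, which is exactly the constraint attached to~\eqref{eq:dual}. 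Hence $G(\beta^{(k)},\theta^{(k)})$ is a legitimate certificate, and the task reduces to bounding its value.

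Next I would reduce the full gap to the gap of the subproblem that coordinate descent actually solves. Let $W$ be the final working set, so $\beta^{(k)}_j = 0$ for $j \notin W$. The discarded coordinates leave the primal value untouched, and the dual objective in~\eqref{eq:dual} depends on the features only through its feasibility constraint, not through its value. Therefore, whenever $\theta^{(k)}$ is feasible for \emph{every} coordinate, the full gap $G(\beta^{(k)},\theta^{(k)})$ equals the gap of the $W$-restricted subproblem evaluated at the same pair. That feasibility is precisely what the KKT check gating termination supplies: \cref{alg:hessian-screening} returns $\beta^{(k)}$ only once $\lvert x_j^T r^{(k)}\rvert \le \lambda_k$ for all $j$, so the maximum defining the scaling denominator is attained inside $W$, the full-problem and reduced-problem dual scalings coincide, and $\theta^{(k)}$ matches the dual iterate used internally on $W$.

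It then remains to translate the solver's stopping rule on $W$ into the stated bound. Coordinate descent is run on the reduced problem until its convergence criterion, controlled by $\varepsilon$, is met; expressing that criterion as a bound on the reduced duality gap, up to the fixed normalization constant the implementation applies and which I collect into $\zeta$, yields $G(\beta^{(k)},\theta^{(k)}) < \zeta\varepsilon$ by the reduction of the previous paragraph. The argument is identical at every step $k$, and for the general losses of \cref{sec:general-loss-functions} it carries over with~\eqref{eq:dual} replaced by the corresponding dual and its feasible point.

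The step I expect to be the main obstacle is the middle one: justifying that dual-point scaling produces a certificate whose value on the full problem does not exceed the reduced gap the solver controls. The delicacy is that at \emph{approximate} convergence some active coordinate may have $\lvert x_j^T r^{(k)}\rvert$ marginally exceeding $\lambda_k$, so the denominator need not equal $\lambda_k$ exactly; one must verify that the resulting shrinkage of $\theta^{(k)}$ away from $r^{(k)}/\lambda_k$ does not inflate the gap past $\zeta\varepsilon$, which is exactly where the constant $\zeta$ earns its place rather than being cosmetic.
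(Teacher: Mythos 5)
Your overall route is the same as the paper's: exhibit the scaled-residual dual point, observe that the zero coordinates outside the working set leave the primal value unchanged while the dual objective \eqref{eq:dual} does not involve \(X\) at all, and conclude that the gap certified on the working-set subproblem is the gap of the full problem once the scaling denominators agree. That part matches the paper's argument. Your closing worry about the denominator marginally exceeding \(\lambda_k\) is not the delicate point, however: the algorithm tests \(G(\beta^{(k)},\theta)<\varepsilon\zeta\) directly on the scaled point \(\theta\), so no further control of that shrinkage is required.

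The genuine gap is your claim that ``\cref{alg:hessian-screening} returns \(\beta^{(k)}\) only once \(\lvert x_j^T r^{(k)}\rvert\le\lambda_k\) for all \(j\).'' That is not what the algorithm does. The KKT conditions are checked first on the strong set \(\mathcal{S}\setminus\mathcal{W}\) and subsequently only on \(\mathcal{G}\setminus(\mathcal{S}\cup\mathcal{W})\), where \(\mathcal{G}\) is the Gap-Safe set; once a coordinate has been discarded from \(\mathcal{G}\) it is never rechecked at step \(k\), and the dual point the algorithm accepts is normalized by \(\max\big(\lambda_k,\lVert X_\mathcal{G}^T r\rVert_\infty\big)\) rather than by the maximum over all \(p\) coordinates. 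Feasibility of \(\theta^{(k)}\) with respect to the coordinates outside \(\mathcal{G}\) therefore does not follow from any explicit check; it has to come from the safety guarantee of the Gap Safe rule, namely \(\mathcal{G}\supseteq\mathcal{A}_k\) \autocite[Theorem 6]{ndiaye2017}. That is precisely the first sentence of the paper's proof and the one ingredient your argument omits; without it your reduction certifies the stated bound only for the \(\mathcal{G}\)-restricted problem, not for the full one.
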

\begin{proof}
  First note that Gap safe screening~\autocite[Theorem 6]{ndiaye2017} ensures
  that \(\mathcal{G} \supseteq \mathcal{A}_k\). Next, note that the algorithm
  guarantees that the working set, \(\mathcal{W}\), grows with each iteration
  until \(|x_j^Tr| < \lambda_k\) for all \(j \in \mathcal{G} \setminus
  \mathcal{W}\), at which point
  \[
    \max\big(\lambda_k, \lVert X^T_\mathcal{W}(y - X_\mathcal{W}\beta_\mathcal{W}^{(k)})\rVert_\infty\big)
    =
    \max\big(\lambda_k, \lVert X^T_\mathcal{G}(y - X_\mathcal{G}\beta_\mathcal{G}^{(k)})\rVert_\infty\big).
  \]
  At this iteration, convergence at line 2, for the subproblem \((X_\mathcal{W}, y)\),
  guarantees convergence for the full problem, \((X, y)\), since
  \[
    \theta^{(k)} = \frac{y - X_\mathcal{W}\beta_\mathcal{W}^{(k)} }{\max\big(\lambda_k, \lVert X^T_\mathcal{W}(y - X_\mathcal{W} \beta_\mathcal{W}^{(k)})\rVert_\infty \big)}
  \]
  is dual-feasible for the full problem.
\end{proof}

\subsubsection{Extensions}
\label{sec:extensions}

\paragraph{Approximate Homotopy}
\label{sec:approximate-homotopy}

In addition to improved screening and warm starts, the Hessian also allows us to
construct the regularization path adaptively via approximate
homotopy~\autocite{mairal2012}. In brief, the Hessian screening rule
allows us to choose the next \(\lambda\) along the path adaptively,
in effect distributing the grid of \(\lambda\)s to better approach the exact
(homotopy) solution for the lasso, avoiding the otherwise heuristic choice,
which can be inappropriate for some data sets.

\paragraph{Elastic Net}
\label{sec:elastic-net}

Our method can be extended to the elastic net~\parencite{zou2005}, which
corresponds to adding a quadratic penalty \(\phi \lVert \beta \rVert_2^2/2\) to
\eqref{eq:primal}. The Hessian now takes the form \(X_\mathcal{A}^T
X_\mathcal{A} + \phi I\). Loosely speaking, the addition of this term makes the
problem ``more`` quadratic, which in turn improves both the accuracy and
stability of the screening and warm starts we use in our method. As far as we
know, however, there is unfortunately no way to update the inverse of the
Hessian efficiently in the case of the elastic net. More research in this area
would be welcome.

\section{Experiments}
\label{sec:experiments}

Throughout the following experiments, we scale and center predictors with the
mean and uncorrected sample standard deviation respectively. For the lasso, we
also center the response vector, \(y\), with the mean.

To construct the regularization path, we adopt the default settings from
\texttt{glmnet}: we use a log-spaced path of 100 \(\lambda\) values from
\(\lambda_\text{max}\) to \(\xi\lambda_\text{max}\), where \(\xi = 10^{-2}\) if
\(p > n\) and \(10^{-4}\) otherwise. We stop the path whenever the deviance
ratio, \(1 - \text{dev}/\text{dev}_\text{null}\), reaches 0.999
or the fractional decrease in deviance is less than \(10^{-5}\).
Finally, we also stop the path whenever the number of coefficients ever to be
active predictors exceeds \(p\).

We compare our method against working+ (the modified version of the working set
strategy from \textcite{tibshirani2012}), Celer~\autocite{massias2018}, and
Blitz~\autocite{johnson2015}. We initially also ran our comparisons against
EDPP~\autocite{wang2015}, the Gap Safe rule~\autocite{fercoq2015}, and Dynamic
Sasvi~\autocite{yamada2021} too, yet these methods performed so poorly that we
omit the results in the main part of this work. The interested reader may
nevertheless consult \cref{sec:experiments-simulateddata-extra}
where results from simulated data has been
included for these methods too.

We use cyclical coordinate descent with shuffling and consider the model to
converge when the duality gap \(G(\beta,\theta) \leq \varepsilon \zeta\), where
we take \(\zeta\) to be \(\lVert y\rVert_2^2\) when fitting the ordinary lasso,
and \(n\log{2}\) when fitting \(\ell_1\)-regularized logistic regression.
Unless specified, we let \(\varepsilon = 10^{-4}\). These settings are standard
settings and, for instance, resemble the defaults used in Celer. For all of the
experiments, we employ the line search algorithm used in Blitz\footnote{Without
  the line search, all of the tested methods ran into convergence issues,
  particularly for the high-correlation setting and logistic regression.}.

The code used in these experiments was, for every method, programmed in C++
using the Armadillo library~\autocite{eddelbuettel2014,sanderson2016} and
organized as an R package via Rcpp~\autocite{eddelbuettel2011}. We used the
renv package~\autocite{ushey2021} to maintain dependencies. The source code,
including a Singularity~\autocite{kurtzer2017} container and its recipe for
reproducing the results, are available at
\url{https://github.com/jolars/HessianScreening}. Additional details of
the computational setup are provided in \cref{sec:computational-setup-details}.

\subsection{Simulated Data}
\label{sec:experiments-simulated}

Let \(X \in \mathbb{R}^{n \times p}\), \(\beta \in \mathbb{R}^{p}\), and \(y
\in \mathbb{R}^n\) be the predictor matrix, coefficient vector, and response
vector respectively. We draw the rows of the predictor matrix independently
and identically distributed from \(\mathcal{N}(0, \Sigma)\) and generate the
response from \(\mathcal{N}(X\beta, \sigma^2I)\) with \(\sigma^2 =
\beta^T\Sigma\beta/\text{SNR}\), where SNR is the signal-to-noise ratio.
We set \(s\) coefficients, equally spaced throughout
the coefficient vector, to 1 and the rest to zero.

In our simulations, we consider two scenarios: a low-dimensional scenario and a
high-dimensional scenario. In the former, we set \(n = 10\,000\), \(p = 100\),
\(s = 5\), and the SNR to 1. In the high-dimensional scenario, we take \(n =
400\), \(p = 40\,000\), \(s = 20\), and set the SNR to 2. These SNR values are
inspired by the discussion in \textcite{hastie2020} and intend to cover the
middle-ground in terms of signal strength. We run our simulations for 20
iterations.

From \cref{fig:performance-simulated}, it is clear that the Hessian screening
rule performs best, taking the least time in every setting examined. The
difference is largest for the high-correlation context in the low-dimensional
setting and otherwise roughly the same across levels of correlation.

The differences between the other methods are on average small, with the
working+ strategy performing slightly better in the \(p > n\) scenario. Celer
and Blitz perform largely on par with one another, although Celer sees an
improvement in a few of the experiments, for instance in logistic regression
when \(p > n\).

\begin{figure}[htb]
  \centering
  \includegraphics{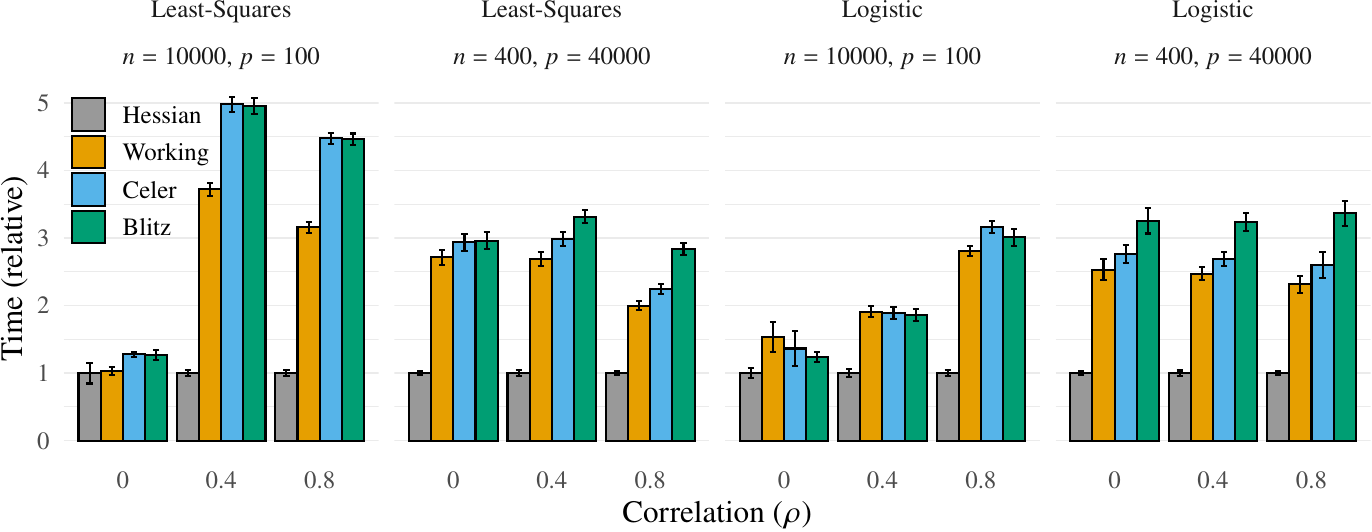}
  \caption{
    Time to fit a full regularization path for \(\ell_1\)-regularized
    least-squares and logistic regression to a design with \(n\)
    observations, \(p\) predictors, and pairwise correlation between
    predictors of \(\rho\). Time is relative to the minimal mean time in
    each group. The error bars represent ordinary 95\% confidence
    intervals around the mean. \label{fig:performance-simulated}
  }
\end{figure}

\subsection{Real Data}
\label{sec:experiments-real}

In this section, we conduct experiments on real data sets. We run 20 iterations
for the smaller data sets studied and three for the larger ones. For information
on the sources of these data sets, please see \cref{sec:real-datasets}.
For more detailed results of these experiments, please see
\cref{sec:experiments-realdata-detailed}.

Starting with the case of \(\ell_1\)-regularized least-squares regression, we
observe that the Hessian screening rule performs best for all five data sets
tested here~(\cref{tab:performance-realdata}), in all but one instance taking
less than half the time compared to the runner-up, which in each case is the
working+ strategy. The difference is particularly large for the
YearPredictionMSD and e2006-tfidf data sets.

In the case of \(\ell_1\)-regularized logistic regression, the Hessian method
again performs best for most of the examined data sets, for instance
completing the regularization path for the madelon data set around five times
faster than the working+ strategy. The exception is
the arcene data set, for which the working+ strategy performs best
out of the four methods.

\begin{table}[hbtp]
  \caption{
    Average time to fit a full regularization path of \(\ell_1\)-regularized
    least-squares and logistic regression to real data sets. Density
    represents the fraction of non-zero entries in \(X\). Density and
    time values are rounded to two and three significant figures
    respectively.\label{tab:performance-realdata}}
  \addtolength{\tabcolsep}{-2pt}
  \csvreader[
    tabular={
        l
        S[table-format=6.0,round-mode=off]
        S[table-format=7.0,round-mode=off]
        S[table-format=1.1e-1,scientific-notation=true,round-precision=2]
        l
        S[table-format=4.4]
        S[table-format=4.3]
        S[table-format=4.3]
        S[table-format=4.3]
      },
    before reading=\scriptsize\centering\sisetup{round-mode=figures,
      round-precision=3},
    table head=\toprule & & & & & \multicolumn{4}{c}{Time (s)} \\
    \cmidrule(lr){6-9} Data Set & {\(n\)} & {\(p\)} & {Density} & {Loss} & {Hessian} &
    {Working} & {Blitz} & {Celer} \\\midrule,
    table foot=\bottomrule
  ]%
  {tables/realdata-timings.csv}%
  {dataset=\dataset, n=\n, p=\p, density=\density, model=\loss, Hessian=\hessian,
    Working=\working, Blitz=\blitz,
    Celer=\celer}%
  {\dataset & \n & \p & \density & \loss & \hessian & \working & \blitz & \celer}
\end{table}

We have provided additional results related to the effectiveness of our method in
\cref{sec:additional-results}.

\section{Discussion}\label{sec:discussion}

In this paper, we have presented the Hessian Screening Rule: a new heuristic
predictor screening rule for \(\ell_1\)-regularized generalized linear models.
We have shown that our screening rule offers large performance improvements
over competing methods, both in simulated experiments but also in the majority
of the real data sets that we study here. The improved performance of the rule
appears to come not only from improved effectiveness in screening, particularly
in the high-correlation setting, but also from the much-improved warm starts,
which enables our method to dominate in the \(n \gg p\) setting. Note that
although we have focused on \(\ell_1\)-regularized least-squares and logistic
regression here, our rule is applicable to any composite objective for which
the differentiable part is twice-differentiable.

One limitation of our method is that it consumes more memory than its
competitors owing to the storage of the Hessian and its inverse. This
cost may become prohibitive for cases when \(\min\{n,p\}\) is
large. In these situations the next-best choice may instead
be the working set strategy.
Note also that we, in this paper, focus entirely on the lasso \emph{path}. The
Hessian Screening Rule is a sequential rule and may therefore not prove optimal
when solving for a single \(\lambda\), in which case a dynamic strategy such as
Celer and Blitz likely performs better.

With respect to the relative performance of the working set strategy, Celer, and
Blitz, we note that our results deviate somewhat from previous
comparisons~\autocite{massias2018,johnson2015}. We speculate that these differences
might arise from the fact that we have used equivalent implementations for all
of the methods and from the modification that we have used for the working set
strategy.

Many avenues remain to be explored in the context of Hessian-based screening
rules and algorithms, such as developing more efficient methods for updating of
the Hessian matrix for non-least-squares objectives, such as logistic
regression and using second-order information to further improve the
optimization method used. Other interesting directions also include adapting
the rules to more complicated regularization problems, such as the fused
lasso~\autocite{tibshirani2005}, SLOPE~\autocite{bogdan2015},
SCAD~\parencite{fan2001}, and MCP~\parencite{zhang2010}. Although the latter
two of these are non-convex problems, they are locally convex for intervals of
the regularization path~\parencite{breheny2011}, which enables the use of our
method.

Finally, we do not expect there to be any negative societal consequences of our
work given that it is aimed solely at improving the performance of an
optimization method.

\begin{ack}
  We would like to thank Małgorzata Bogdan for valuable and encouraging
  comments. This work was funded by the Swedish Research Council through grant
  agreement no. 2020-05081 and no. 2018-01726. The computations were enabled by
  resources provided by LUNARC. The results shown here are in whole or part
  based upon data generated by the TCGA Research Network:
  \url{https://www.cancer.gov/tcga}.
\end{ack}

\printbibliography

\appendix

\section{Proofs}\label{sec:proofs}

\subsection{Proof of Theorem 1}

It suffices to verify that the KKT conditions hold for
\(\hat{\beta}^{\lambda^*}(\lambda)\), i.e. that \(\boldsymbol{0}\) is in
the subdifferential. By (ii) it follows that the indices
\(\mathcal{A}^c_{\lambda^*}\) in the subdifferential contain zero. That
leaves us only to show that \(\nabla
f\big(\hat{\beta}^{\lambda^*}(\lambda);X\big)_{\mathcal{A}_{\lambda^*}}= \lambda
\sign\big(\hat{\beta}^{\lambda^*}(\lambda)\big)_{\mathcal{A}_{\lambda^*}}\).
\begin{align*}
   & \nabla f\big(\hat{\beta}^{\lambda^*}(\lambda);X\big)_{\mathcal{A}_{\lambda^*}} \\
   & = X^T_{\mathcal{A}_{\lambda^*}} \big(y- X_{\mathcal{A}_{\lambda^*}}
  \hat{\beta}^{\lambda^*} (\lambda)_{\mathcal{A}_{\lambda^*}} \big)                 \\
   & = X^T_{\mathcal{A}_{\lambda^*}} \Bigg(y- X_{\mathcal{A}_{\lambda^*}}
  \beta(\lambda^*)_{\mathcal{A}_{\lambda^*}}-
  \big(\lambda^* - \lambda \big){X_{\mathcal{A}_{\lambda^*}}
    \big( X_{\mathcal{A}_{\lambda^*}}^TX_{\mathcal{A}_{\lambda^*}}
    \big)^{-1} \sign \beta(\lambda^*)_{\mathcal{A}_{\lambda^*}}}
  \Bigg)                                                                            \\
   & = \nabla f\big(\hat{\beta}^{\lambda^*}
  (\lambda^*)\big)_{\mathcal{A}_{\lambda^*}} - \left(\lambda^* - \lambda
  \right) \sign \hat{\beta}(\lambda^*)_{\mathcal{A}_{\lambda^*}}                    \\
   & = \lambda \sign \hat{\beta}(\lambda^*)_{\mathcal{A}_{\lambda^*}},
\end{align*}
which by (i) equals
\(\lambda \sign(\hat{\beta}^{\lambda^*}(\lambda))_{\mathcal{A}_{\lambda^*}}\).

\section{Algorithms}
\label{sec:algorithms}

\begin{algorithm}[hbtp]
  \caption{This algorithm provides computationally efficient updates for
    the inverse of the Hessian. Note the slight abuse of notation here in
    that \(\mathcal{E}\) is used both for \(X\) and \(Q\). It is implicitly
    understood that \(Q_{\mathcal{E}\mathcal{E}}\) is the sub-matrix of
    \(Q\)
    that corresponds to the columns \(\mathcal{E}\) of
    \(X\).}
  \label{alg:hessian-update}
  \begin{algorithmic}
    \REQUIRE \(X, H = X_\mathcal{A}^TX_\mathcal{A}, Q \coloneqq H^{-1},
    \mathcal{A}, \mathcal{B}\)
    \STATE \(\mathcal{C} \coloneqq \mathcal{A} \setminus \mathcal{B}\)
    \STATE \(\mathcal{D} \coloneqq \mathcal{B} \setminus \mathcal{A}\)
    \IF{\(\mathcal{C} \neq \varnothing\)}
    \STATE \(\mathcal{E} \coloneqq \mathcal{A} \cap \mathcal{B}\)
    \STATE \(Q \coloneqq Q_{\mathcal{E}\mathcal{E}} -
    Q_{\mathcal{E}\mathcal{E}^c}Q^{-1}_{\mathcal{E}^c\mathcal{E}^c}
    Q_{\mathcal{E}\mathcal{E}^c}^T\)
    \STATE \(\mathcal{A} \coloneqq \mathcal{E}\)
    \ENDIF

    \IF{\(\mathcal{D} \neq \varnothing\)}
    \STATE \(S \coloneqq X_\mathcal{D}^TX_\mathcal{D} -
    X_\mathcal{D}^TX_\mathcal{A}QX^T_\mathcal{A} X_\mathcal{D}\)
    \STATE \(Q \coloneqq
    \begin{bmatrix}
      Q + Q
      X_\mathcal{A}^TX_\mathcal{D}S^{-1}X_\mathcal{D}^TX_\mathcal{A} Q &
      - Q
      X_\mathcal{A}^TX_\mathcal{D}S^{-1}
      \\
      -S^{-1}X_\mathcal{D}^TX_\mathcal{A}Q
                                                                       &
      S^{-1}
    \end{bmatrix}\)
    \ENDIF
    \STATE Return \(H^*\)
  \end{algorithmic}
\end{algorithm}

\begin{algorithm}[hbtp]
  \caption{The Hessian screening method for the ordinary least-squares lasso}
  \label{alg:hessian-screening}
  \begin{algorithmic}[1]
    \REQUIRE \(X \in \mathbb{R}^{n \times p}\), \(y \in \mathbb{R}^n\), \(\lambda \in \{\mathbb{R}^m_+ : \lambda_1 = \lambda_\text{max}, \lambda_1 > \lambda_2 > \cdots > \lambda_m\}\), \(\varepsilon > 0\)
    \ENSURE \(k \gets 1\), \(\beta^{(0)} \gets 0\), \(\zeta \gets \lVert y \rVert_2^2\), \(\mathcal{W} \gets \varnothing\), \(\mathcal{A} \gets \varnothing\), \(\mathcal{S} \gets \varnothing\), \(\mathcal{G} \gets \{1,2,\dots,p\} \)
    \WHILE{\(k \leq m\)}
    \STATE \(\beta_\mathcal{W}^{(k)} \gets \big\{\beta \in \mathbb{R}^{|\mathcal{W}|} : G\big(\beta, (y - X_\mathcal{W}\beta) / \max(\lambda_k, \lVert X^T_\mathcal{W}(y - X_\mathcal{W}\beta)\rVert_\infty )\big) < \zeta \varepsilon\big\}\)
    \STATE \(\beta^{(k)}_{\mathcal{W}^\mathsf{C}} \gets 0\)
    \STATE \(\mathcal{A} \gets \{j : \beta_j \neq 0\}\)
    \STATE \(r \gets y - X_\mathcal{W}\beta^{(k)}_\mathcal{W}\)
    \STATE \(\mathcal{V} \gets \{j \in \mathcal{S} \setminus \mathcal{W} : |x_j^T r| \geq \lambda_k \}\) \COMMENT{Check for violations in Strong set}
    \IF{\(\mathcal{V} = \varnothing\)}
    \STATE \(\theta \gets r / \max\big(\lambda_k, \lVert X_\mathcal{G}^Tr \rVert_\infty \big)\) \COMMENT{Compute dual-feasible point}%
    \IF{\(G(\beta^{(k)}, \theta) < \varepsilon \zeta\)}
    \STATE Update \(H\) and \(H^{-1}\) via \cref{alg:hessian-update}
    \STATE \(\mathcal{W} \gets \{j : |\tilde{c}^H(\lambda_{k + 1})| < \lambda_{k + 1}\} \cup \mathcal{A}\) \COMMENT{Hessian rule screening}%
    \STATE \(\mathcal{S} \gets \{j : |\tilde{c}^S(\lambda_{k + 1})| < \lambda_{k + 1}\}\) \COMMENT{Strong rule screening}%
    \STATE Initialize \(\beta^{(k + 1)}_\mathcal{A}\) using \eqref{eq:warm-start} \COMMENT{Hessian warm start}
    \STATE \(\mathcal{G} \gets \{1, 2, \dots, p\}\) \COMMENT{Reset Gap-Safe set}
    \STATE \(k \gets k + 1\) \COMMENT{Move to next step on path}
    \ELSE
    \STATE \(\mathcal{G} \gets \left\{j \in\mathcal{G} : |x_j^T\theta| \geq 1 - \lVert x_j \rVert_2 \sqrt{2G(\beta^{(k)}, \theta)/\lambda^2_k}\right\}\) \COMMENT{Gap-Safe screening}
    \STATE \(\mathcal{V} \gets \{j \in \mathcal{G} \setminus \big(\mathcal{S} \cup \mathcal{W}\big) : |x_j^T r| \geq \lambda_k \}\) \COMMENT{Check for violations in Gap-Safe set}
    \STATE \(\mathcal{W} \gets \mathcal{W} \cap \mathcal{G}\)
    \STATE \(\mathcal{S} \gets \mathcal{S} \cap \mathcal{G}\)
    \ENDIF
    \ENDIF
    \STATE \(\mathcal{W} \gets \mathcal{W} \cup \mathcal{V}\) \COMMENT{Augment working set with violating predictors}
    \ENDWHILE
    \RETURN \(\beta\)
  \end{algorithmic}
\end{algorithm}

\section{Singular or Ill-Conditioned Hessians}\label{sec:nullspace}

In this section, we discuss situations in which the
Hessian is singular or ill-conditioned and propose remedies for these
situations.

Inversion of the Hessian demands that the null space corresponding to the
active predictors \({\mathcal{A}_\lambda}\) contains only the zero vector,
which typically holds when the columns of \(X\) are in general position,
such as in the case of data simulated from continuous distributions. It is
not, however, generally the case with discrete-valued data, particularly not in
when \(p \gg n\). In \cref{lem:nullspace}, we formalize this point.

\begin{lemma}
  \label{lem:nullspace}
  Suppose that we have \(e \in \mathbb{R}^p\) such that \(Xe=0\). Let
  \(\hat{\beta}(\lambda)\) be the solution to the primal problem
  \eqref{eq:primal} and
  \(\mathcal{E}=\{i: e_i \neq 0\}\); then
  \(|\hat{\beta}(\lambda)_\mathcal{E}|>0\) only if there exists a \(z \in
  \mathbb{R}^p\) where \(z_{\mathcal{E}} \in \{-1,1\}^{|\mathcal{E}|}\)
  such that \({z^T}e=0\).
\end{lemma}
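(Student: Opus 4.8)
The plan is to exploit a single structural fact: any direction $e$ in the null space of $X$ leaves the differentiable part of the objective unchanged, and therefore couples directly to the subgradient of the $\ell_1$ term through the stationarity condition \eqref{eq:subgrad-solution}. The required sign vector $z$ can then simply be read off from that subgradient. Throughout I use that $f$ depends on $\beta$ only through $X\beta$ (as in \eqref{eq:gloss}, and in particular for the least-squares loss), so that its gradient lies in the row space of $X$: we may write $\nabla_\beta f(\beta;X) = X^T h$ for some $h \in \mathbb{R}^n$. Consequently, whenever $Xe = 0$ we get $e^T \nabla_\beta f(\beta;X) = (Xe)^T h = 0$.

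Next I would invoke stationarity at the solution. By \eqref{eq:subgrad-solution} there is a subgradient $\tilde\partial \in \partial\lVert\hat\beta(\lambda)\rVert_1$ with $\nabla_\beta f(\hat\beta(\lambda);X) + \lambda\tilde\partial = 0$. Left-multiplying by $e^T$ and using the observation above annihilates the gradient term, leaving $\lambda\, e^T\tilde\partial = 0$. Since $\lambda > 0$ along the path, this forces $e^T\tilde\partial = 0$, and because $e_j = 0$ for $j \notin \mathcal{E}$, this is exactly $\sum_{j\in\mathcal{E}} e_j\tilde\partial_j = 0$.

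Finally I would bring in the hypothesis $|\hat\beta(\lambda)_\mathcal{E}| > 0$: every coordinate in $\mathcal{E}$ is active, so the subdifferential is single-valued there and $\tilde\partial_j = \sign(\hat\beta(\lambda)_j) \in \{-1,1\}$ for all $j \in \mathcal{E}$. Setting $z \coloneqq \tilde\partial$ (or any vector that agrees with $\tilde\partial$ on $\mathcal{E}$) then gives $z_\mathcal{E}\in\{-1,1\}^{|\mathcal{E}|}$ together with $z^T e = 0$, which is precisely the claim.

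The main obstacle here is conceptual rather than technical: the key move is recognizing that pairing the stationarity equation with the null vector $e$ isolates the subgradient on $\mathcal{E}$; once that is in place everything is immediate. The only point needing care is the appeal to activeness, which is used solely to pin $\tilde\partial$ to the true signs on $\mathcal{E}$ (on $\mathcal{E}^c$ the value of $z$ is irrelevant because $e$ vanishes there). I would also remark that the same identity has a transparent geometric reading: since the loss is constant along the line $\{\hat\beta + te\}$, optimality of $\hat\beta$ means $t \mapsto \lVert\hat\beta+te\rVert_1$ is minimized at $t = 0$, and the vanishing of its directional derivative reproduces $\sum_{j\in\mathcal{E}}\sign(\hat\beta_j)e_j = 0$.
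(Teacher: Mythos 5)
Your proof is correct and follows essentially the same route as the paper's: both pair the stationarity condition with the null-space direction $e$, use $Xe=0$ to annihilate the gradient term, and read off $z_{\mathcal{E}}=\sign(\hat\beta(\lambda)_{\mathcal{E}})$ from the now single-valued subdifferential on the active coordinates. Your write-up is if anything slightly more careful (making explicit that $\nabla_\beta f$ lies in the row space of $X$ and that $\lambda>0$), but there is no substantive difference in the argument.
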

\begin{proof}
  \(\sum_{j\in	\mathcal{E}} x_{j} e_{j}=0\) by assumption. Then, since
  \(\hat{\beta}(\lambda)\) is the solution to the primal problem, it follows
  that \(x_j^T\nabla f(X\beta)= \sign(\beta_j) \lambda\) for all
  \(j \in \mathcal{E}\). Hence
  \[
    \sum_{j\in \mathcal{E}} x_{j}^T\nabla f(X\beta) e_{j}
    = \sum_{j\in \mathcal{E}} \sign(\beta_j) \lambda e_{j}
    = \lambda\sum_{j\in  \mathcal{E}} \sign(\beta_j) e_{j}
    = 0
  \]
  and \(z_{\mathcal{E}^C}=0\),
  \(z_{\mathcal{E}} = \sign(\beta_{\mathcal{E}})\).
\end{proof}
In our opinion, the most salient feature of this result is that if all
predictors in \(\mathcal{E}\) except \(i\) are known to be active, then
predictor
\(i\) is active iff \(e_i=\sum_{j\in \mathcal{E}\setminus i} \pm e_j \). If the
columns of \(X\) are independent and normally distributed, this cannot occur
and hence one will never see a null space in \(X_{\mathcal{A}}\). Yet if
\(X_{ij}\in \{0,1\}\), one should expect the null space to be non-empty
frequently. A simple instance of this occurs when the columns of \(X\) are
duplicates, in which case \(|e|=2\).

Duplicated predictors are fortunately easy to handle since they enter the
model simultaneously. And we have, in our program, implemented measures that
deal efficiently with this issue by dropping them from the solution
after fitting and adjust \(\hat\beta\) accordingly.

Dealing with the presence of rank-deficiencies due to the existence
of linear combinations among the predictors is more challenging. In
the work for this paper, we developed a strategy to deal with this issue
directly by identifying such linear combinations through spectral
decompositions. During our experiments, however, we discovered that this method
often runs into numerical issues that require other modifications that
invalidate its potential. We have therefore opted for a different
strategy.

To deal with singularities and ill-conditioned Hessian matrices, we instead
use preconditioning. At step \(k\),
we form the spectral decomposition \[H_{\mathcal{A}_k} = Q \Lambda Q^T.\]
Then, if \(\min_i\big(\operatorname{diag}(\Lambda)\big) < \alpha\), we add a
factor \(\alpha\) to the diagonal of \(H_{\mathcal{A}_k}\). Then we
substitute \[\hat H_{\mathcal{A}_k}^{-1} = Q^T ( I \alpha + \Lambda)^{-1} Q\]
for the true Hessian inverse. An analogous approach is taken when
updating the Hessian incrementally as in \cref{alg:hessian-update}. In
our experiments, we have set \(\alpha \coloneqq n 10^{-4}\).

\section{Computational Setup Details}\label{sec:computational-setup-details}

The computer used to run had the following specifications:
\begin{description}
  \item[CPU] Intel i7-10510U @ 1.80Ghz (4 cores)
  \item[Memory] 64 GB (3.2 GB/core)
  \item[OS] Fedora 36
  \item[Compiler] GNU GCC compiler v9.3.0, C++17
  \item[BLAS/LAPACK] OpenBLAS v0.3.8
  \item[R version] 4.1.3
\end{description}

\section{Real Data Sets}\label{sec:real-datasets}

All of the data sets except \emph{arcene}, \emph{scheetz}, and \emph{bc\_tcga}
werew retrieved from
\url{https://www.csie.ntu.edu.tw/~cjlin/libsvmtools/datasets/}~\autocite{chang2011,chang2016}.
arcene was retrieved from
\url{https://archive.ics.uci.edu/ml/datasets/Arcene}~\autocite{guyon2004,dua2019}
and scheetz and bc\_tcga from
\url{https://myweb.uiowa.edu/pbreheny}~\autocite{breheny2022}. Their original
sources have been listed in \cref{tab:real-datasets}. In
each case where it is available we use the training partition of the data set
and otherwise the full data set.

\begin{table}[bt]
  \centering
  \caption{Source for the real data sets used in our experiments.\label{tab:real-datasets}}
  \begin{tabular}{ll}
    \toprule
    Dataset            & Sources                                \\
    \midrule
    arcene             & \textcite{guyon2004,dua2019}           \\
    bcTCGA             & \textcite{nationalcancerinstitute2022} \\
    colon-cancer       & \textcite{alon1999}                    \\
    duke-breast-cancer & \textcite{west2001}                    \\
    e2006-log1p        & \textcite{kogan2009}                   \\
    e2006-tfidf        & \textcite{kogan2009}                   \\
    ijcnn1             & \textcite{prokhorov2001}               \\
    madelon            & \textcite{guyon2004}                   \\
    news20             & \textcite{keerthi2005}                 \\
    rcv1               & \textcite{lewis2004}                   \\
    scheetz            & \textcite{scheetz2006}                 \\
    YearPredictionMSD  & \textcite{bertin-mahieux2011,dua2019}  \\
    \bottomrule
  \end{tabular}
\end{table}

\section{Additional Results}\label{sec:additional-results}

In this section, we present additional results related to the
performance of the Hessian Screening Rule.

\subsection{Path Length}\label{sec:path-length-results}

Using the same setup as in \cref{sec:experiments} but with \(n=200\), \(p =
20\,000\) for the high-dimensional setting, we again benchmark the time
required to fit a full regularization path using the different methods studied
in this paper. The results~(\cref{fig:path-length}) show that the
Hessian Screening Method out-performs the studied alternatives except
for the low-dimensional situation and a path length of 10 \(\lambda\)s.
The results demonstrate that our method pays a much smaller price for
increased path resolution compared to the other methods but that the increased
marginal costs of updating the Hessian may make the method less appealing in
this case.

\begin{figure}[tb]
  \centering
  \includegraphics{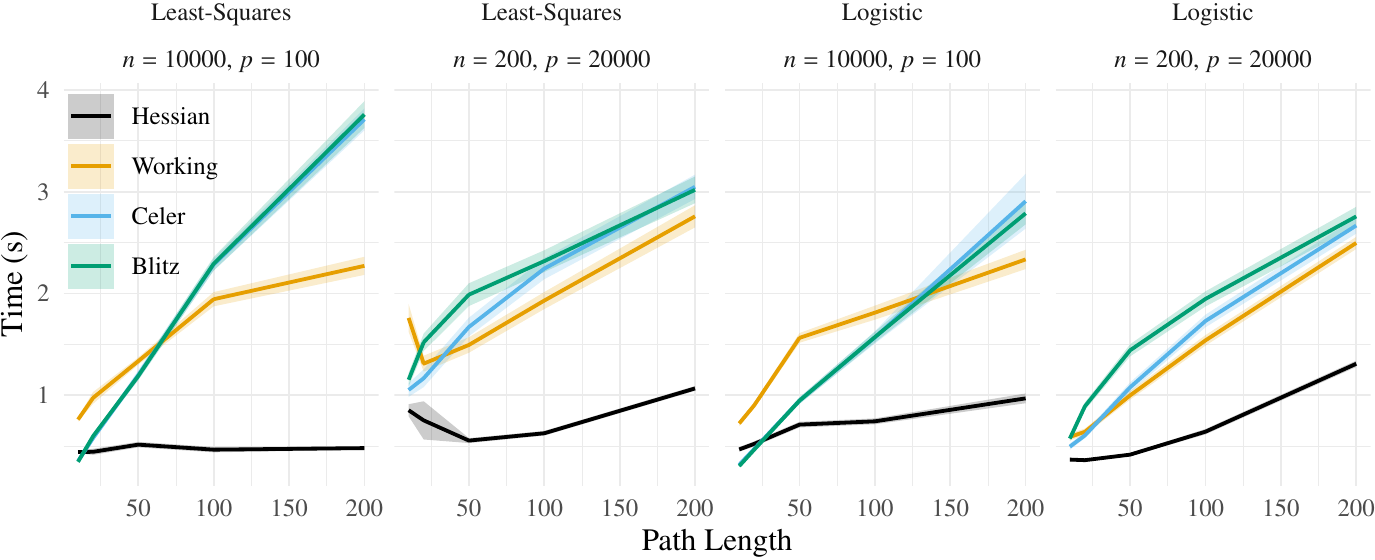}
  \caption{The time in seconds required to fit a full regularization path with length
    given on the x axis.\label{fig:path-length}}
\end{figure}

\subsection{Convergence Tolerance}\label{sec:stopping-threshold-results}

To better understand if and how the stopping threshold used in the solver
affects the performance of the various methods we test, we conduct simulations
where we vary the tolerance, keeping the remaining parameters constant.
We use the same situation as in the high-dimensional scenario (see
\cref{sec:experiments})
but use \(n = 200\), \(p = 20\,000\). We run
the experiment for tolerances \(10^{-3}, 10^{-4}, 10^{-5}\), and \(10^{-6}\).
The results~(\cref{fig:stopping-threshold}) indicate that the choice of
stopping threshold has some importance for convergence time but that the
gap between our method and the alternatives tested never disappears.

\begin{figure}[htb]
  \centering
  \includegraphics{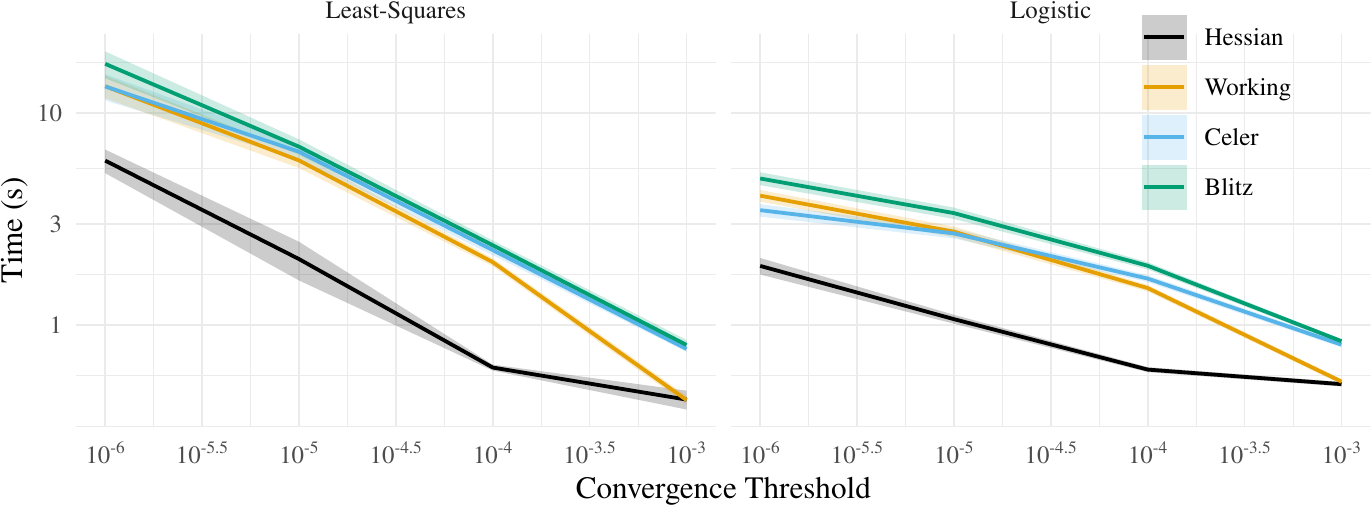}
  \caption{Time required to fit a full regularization path for the
    high-dimensional scenario setup in \cref{sec:experiments} for
    both \(\ell_1\)-regularized least-squares and logistic regression, with \(n =
    200\) and \(p = 20\,000\). Both the x and y axis are on a \(\log_{10}\) scale.
    \label{fig:stopping-threshold}}
\end{figure}

\subsection{The Benefit of Augmenting Heuristic Methods with Gap Safe Screening}
\label{sec:gap-safe-benefit}

To study the effectiveness of augmenting the Hessian Screening and working
methods with a gap-safe check, we conduct experiments using the
high-dimensional setup in \cref{sec:experiments} but with \(n = 200\) and
\(p = 20\,000\), either enabling this augmentation or disabling it. We also
vary the level of correlation, \(\rho\). Each combination is benchmarked
across 20 iterations.

The results indicate that the addition of gap safe screening makes a definite,
albeit modest, contribution to the performance of the methods, particularly
in the case of the working strategy, which is to be expected given that
the working strategy typically runs more KKT checks that the Hessian method
does since it causes many more violations.

\begin{figure}[htbp]
  \centering
  \includegraphics{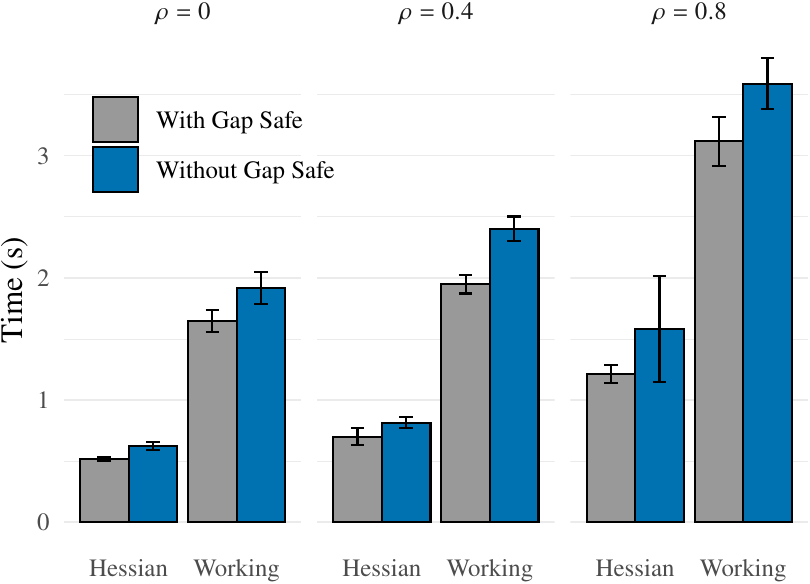}
  \caption{%
    Average time in seconds required to fit a full regularization path for the
    high-dimensional scenario setup in Section~\cref{sec:experiments} for
    \(\ell_1\)-regularized least-squares regression, with \(n = 200\) and \(p =
    20\,000\), using the Hessian and working set methods with or without the
    addition of Gap Safe screening. The bars represent ordinary 95\% confidence
    intervals.\label{fig:gap-safe-benefit}
  }
\end{figure}

\subsection{Effectiveness and Violations}
\label{sec:effectiveness-and-violations}

To study the effectiveness of the screening rule, we conduct as experiment
using the setup in \cref{sec:experiments}~(main paper) but with
\(n=200\) and \(p = 20\,000\). We run 20 iterations and average the number of
screened predictors as well as violations across the entire path.

Looking at the effectiveness of the screening rules, we see that the Hessian
screening rule performs as desired for both \(\ell_1\)-regularized least-squares
and logistic regression~(\cref{fig:efficiency-simulated}), leading to
a screened set that lies very close to the true size. In particular, the
rule works much better than all alternatives in the case of high correlation,

\begin{figure}[hbtp]
  \centering
  \includegraphics{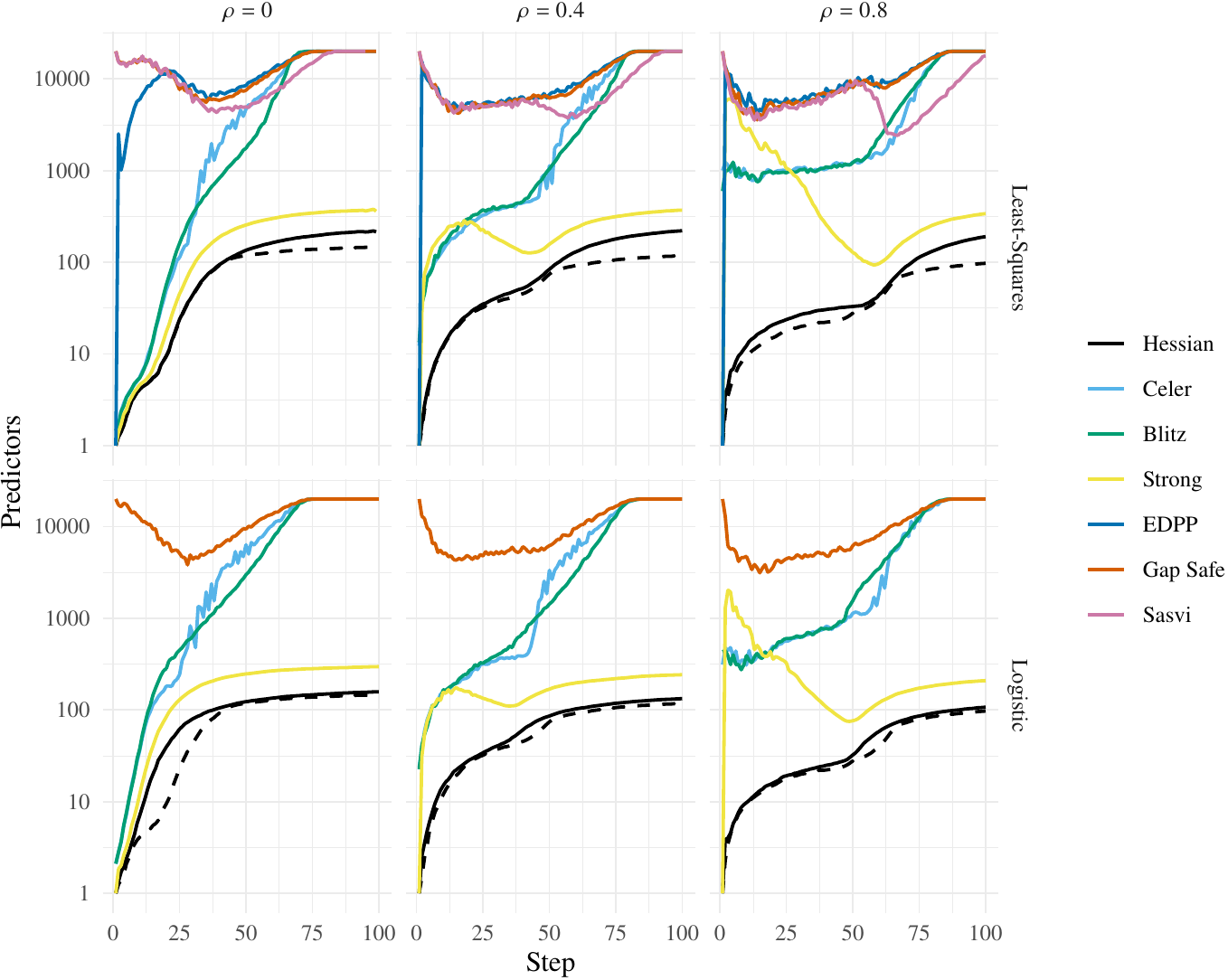}
  \caption{
    The number of predictors screened (included) for each given screening rule,
    as well as the minimum number of active predictors at each step as a dashed
    line. The values are averaged over 20 repetitions in each condition. Note
    that the y-axis is on a \(\log_{10}\) scale.\label{fig:efficiency-simulated}
  }
\end{figure}

In \cref{tab:violations}, we show the average numbers of screened (included)
predictors and violations for the heuristic screening rules across the path. We
note, first, that EDPP never lead to any violations and that the Strong rule
only did so once throughout the experiments. The Hessian rule, on the other
hand, leads to more violations, particularly when there is high correlation. On
the other hand, the Hessian screening rule successfully discards many more
predictors than the other two rules do. And because the Hessian method
always checks for violations in the strong rule set first, which is
demonstrably conservative, these violations are of little importance
in practice.

\begin{table}[hbtp]
  \caption{
    Numbers of screened predictors and violations averaged over
    the entire path and 20 iterations for simulated data with \(n =
    20\,000\) \(p = 200\) and correlation level equal to \(\rho\).\label{tab:violations}}
  \csvreader[
    tabular={
        l
        S[table-format=1.1,round-mode=off,round-precision=1]
        l
        S[table-format=5.0,round-precision=0,round-mode=places]
        S[table-format=1.5,round-precision=2,round-mode=figures]
      },
    before reading=\small\centering,
    table head=\toprule {Model} & {\(\rho\)} & {Method} & {Screened} &
    {Violations}\\\midrule,
    table foot=\bottomrule
  ]%
  {tables/violations.csv}%
  {model=\model, rho=\rhocol, method=\method, screened=\screened,
    violations=\violations}%
  {\model & \rhocol & \method & \screened & \violations}
\end{table}

\subsection{Detailed Results on Real Data}
\label{sec:experiments-realdata-detailed}

In \cref{tab:performance-realdata-details} we show
\cref{tab:performance-realdata} with additional detail, including confidence
intervals and higher figure resolutions. Please see \cref{sec:experiments} for
commentary on these results, where they have been covered in full.

\begin{table}[hbtp]
  \caption{
    Time to fit a full regularization path of \(\ell_1\)-regularized
    least-squares and logistic regression to real data sets. Density and
    time values are rounded to two and four significant figures
    respectively. The estimates are based on 20 repetitions for
    arcene, colon-cancer, duke-breast-cancer, and ijcnn1 and three repetitions
    otherwise. Standard 95\% confidence levels are included. \label{tab:performance-realdata-details}}
  \addtolength{\tabcolsep}{-1.8pt}
  \csvreader[
    tabular={
        l
        S[table-format=6.0,round-mode=off]
        S[table-format=7.0,round-mode=off]
        S[table-format=1.1e-1,scientific-notation=true,round-precision=2]
        l
        l
        S[table-format=4.4]
        S[table-format=4.4]
        S[table-format=4.4]
      },
    before reading=\scriptsize\centering\sisetup{round-mode=figures, round-precision=3},
    table head=\toprule & & & & & & & \multicolumn{2}{c}{95\% CI} \\
    \cmidrule(lr){8-9} Dataset & {\(n\)} & \(p\) & {Density} & {Loss} & {Method} & {Time (s)} & {Lower} & {Upper} \\\midrule,
    table foot=\bottomrule]%
  {tables/realdata-timings-details.csv}%
  {dataset=\dataset, n=\n, p=\p, density=\density, model=\loss,
    screening_type=\method, mean_time=\time, lo=\lo, hi=\hi}%
  {\dataset & \n & \p & \density & \loss & \method & \time & \lo & \hi}%
\end{table}

\subsection{Additional Results on Simulated Data}
\label{sec:experiments-simulateddata-extra}

In \cref{fig:simulateddata-extra-timings}, we show results for the ordinary least-squares
lasso for the Sasvi, Gap Safe, and EDPP methods, which were not included in the main paper.

\begin{figure}[htpb]
  \centering
  \includegraphics[]{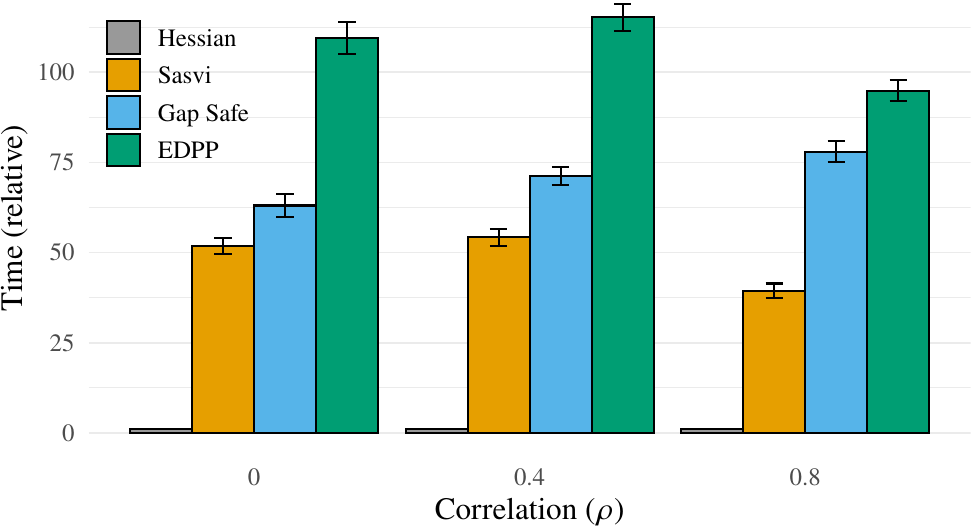}
  \caption{%
    Additional results on simulated data for methods not included in the main article.
    The results correspond to the ordinary (least-squares) lasso with \(n=400\),
    \(p=40\,000\) and varying levels of pairwise correlation between predictors, \(\rho\).
  }
  \label{fig:simulateddata-extra-timings}
\end{figure}

\subsection{Gamma}

In this section we present the results of experiments targeting \(\gamma\),
the parameter for the Hessian rule that controls how much of the
unit bound (used in the Strong Rule) that is included in the
correlation vector estimate from the Hessian rule.

We run 50 iterations of the high-dimensional setup from \cref{sec:experiments}
and measure the number of predictors screened (included) by the Hessian
screening rule, the number of violations, and the time taken to fit the full
path. We vary \(\gamma\) from 0.001 to 0.3.

The results are presented in \cref{fig:gamma}. From the figure it is clear that
the number of violations in fact has a slightly negative impact on the speed at
which the path is fit. We also see that the number of violations is small
considering the dimension of the data set (\(p = 40\,000\)) and approach zero
at \(\gamma\) values around 0.1 for the lowest level of correlation, but have
yet to reach exactly zero at 0.3 for the highest level of correlation. The size
of the screened set increase only marginally as \(\gamma\) increases fro 0.001
to 0.01, but eventually increase rapidly at \(\gamma\) approaches 0.3. Note,
however, that the screened set is still very small relative to the full set of
predictors.

\begin{figure}[htbp]
  \centering
  \includegraphics[]{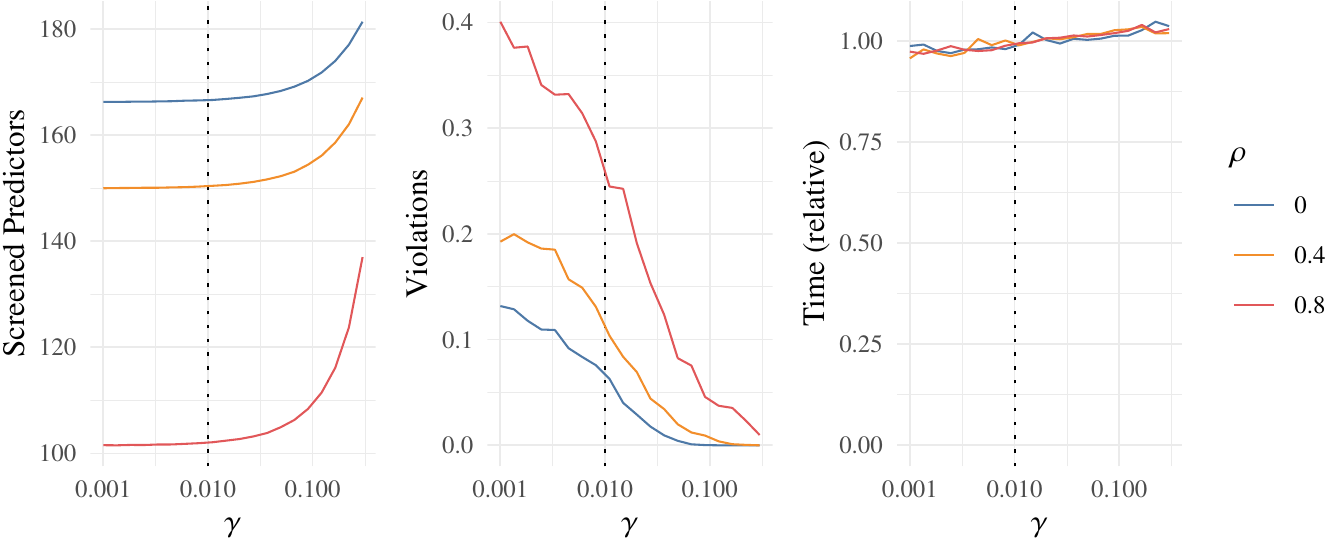}
  \caption{%
    The number of predictors screened (included), the number of
    violations, and the time taken to fit the full path. All measures in the
    plots represent means across combinations of \(\rho\) and \(\gamma\)
    over 50 iterations. The time recorded here is the time relative to the
    mean time for each level of \(\rho\). The choice of \(\gamma\) in this
    work, 0.01, is indicated by a dotted line in the plots. Note that \(x\)
    is on a \(\log_{10}\) scale.\label{fig:gamma}
  }
\end{figure}

\subsection{Ablation Analysis}
\label{sec:ablation}

In this section we report an experiment wherein we study the effects of
the various features of the Hessian screening method by incrementally
adding them and timing the result.

We add features incrementally in the following order, such that
each step includes all of the previous features.
\begin{enumerate}
  \item Hessian screening
  \item Hessian warm starts
  \item Effective updates of the Hessian matrix and its inverse
        using the sweep operator
  \item Gap safe screening
\end{enumerate}
We then run an experiment on a design with \(n = 200\) and \(p = 20\,000\) and
two levels of pairwise correlation between the predictors. The
results~(\cref{fig:ablation}) show that both screening and warm starts make
considerable contributions in this example.

Note that these results are conditional on the order with which they are added
and also on the specific design. The Hessian updates, for instance, make a
larger contribution when \(\min\{n,p\}\) is larger and \(n\) and \(p\) are more
similar. And when \(n \gg p\), the contribution of the warm starts dominate
whereas screening no longer plays as much of a role.

\begin{figure}[htbp]
  \centering
  \includegraphics[]{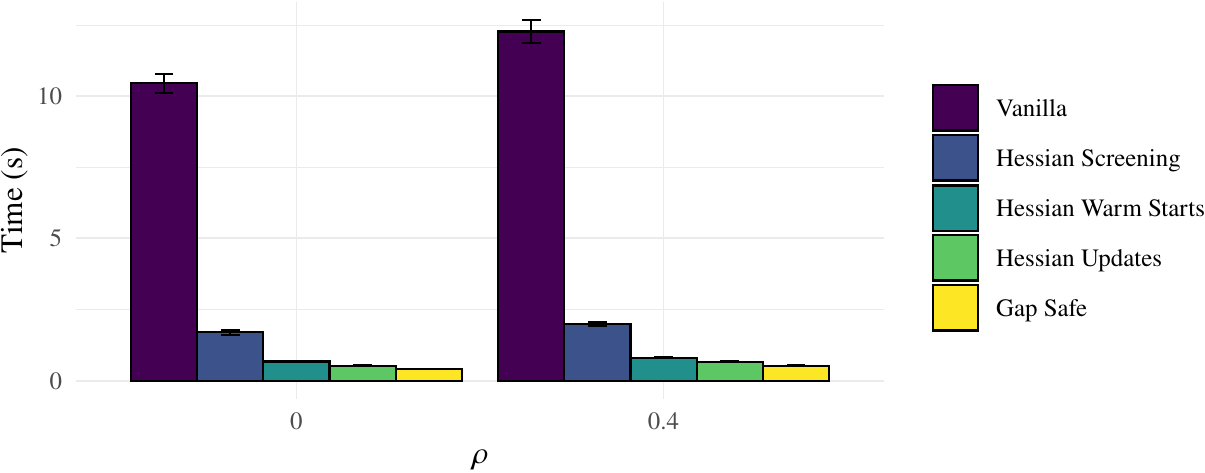}
  \caption{%
    Incremental contribution to the decrease in running time from Hessian
    screening, Hessian warm starts, our effective updates of the Hessian and
    its inverse, and gap safe screening. In other words, \emph{Gap Safe}, for
    instance, includes \emph{all} of the other features, whilst \emph{Hessian Warm
      Starts} includes only \emph{Hessian Screening}. \emph{Vanilla} does not
    include any screening and only uses standard warm starts (from the
    solution at the previous step along the path). The example shows an
    example of ordinary (least-squares) lasso fit to a design with \(n =200\) and
    \(p = 20\,000\) with pairwise correlation between predictors given by \(\rho\).
    (See \cref{sec:experiments} for more details on the setup). The
    error bars indicate standard 95\% confidence intervals. The results are
    based on 10 iterations for each condition.\label{fig:ablation}
  }
\end{figure}

\subsection{\texorpdfstring{\(\ell_1\)}{L1}-Regularized Poisson Regression}
\label{sec:simulateddata-poisson}

In this experiment, we provide preliminary results for \(\ell_1\)-regularized
Poisson regression. The setup is the same as \cref{sec:experiments} except for
the following remarks:
\begin{itemize}
  \item The response, \(y\), is randomly sampled such that \(y_i \sim
        \operatorname{Poisson}\big(\exp(x_i^T\beta)\big)\).
  \item We set \(\zeta\) in the convergence criterion to \(n + \sum_{i=1}^n \log (y_i!) \).
  \item We do not use the line search procedure from Blitz.
  \item Due to convergence issues for higher values of \(\rho\), we use values \(0.0\),
        \(0.15\), and \(0.3\) here. Tackling higher values of \(\rho\) would likely
        need considerable modifications to the coordinate descent solver we use.
    \item The gradient of the negative Poisson log-likelihood is not Lipschitz
        continuous, which means that Gap safe screening~\parencite{ndiaye2017} no
        longer works. As a result, we have excluded the Blitz and Celer
        algorithms, which rely on Gap safe screening, from these benchmarks, and deactivated
        the additional Gap safe screening from our algorithm.
\end{itemize}

The results from the comparison are shown in
\cref{fig:simulateddata-poisson-timings}, showing that our algorithm is
noticeably faster than the working algorithm also in this case.

\begin{figure}[htpb]
  \centering
  \includegraphics[]{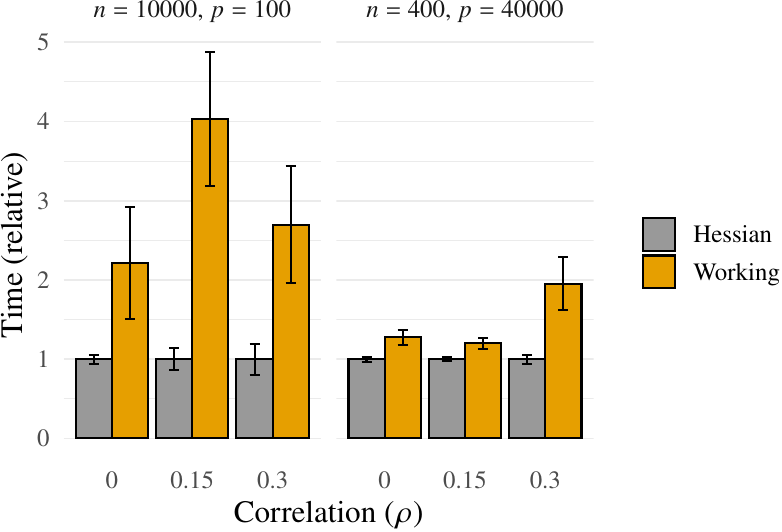}
  \caption{%
    Time to fit a full regularization path for \(\ell_1\)-regularized Poisson
    regression to a design with observations, \(p\) predictors, and pairwise
    correlation between predictors of \(\rho\). Time is relative to the minimal
    mean time in each group. The error bars represent ordinary 95\% confidence
    intervals around the mean.
  }
  \label{fig:simulateddata-poisson-timings}
\end{figure}

\subsection{Runtime Breakdown Along Path}

In this section we take a closer look at the running time of fitting the
full regularization path and study
the impact the Hessian screening rule and its warm starts have on the time
spent on optimization of the problem using coordinate descent (CD).

To illustrate these cases we take a look at three data sets here:
\emph{e2006-tfidf}, \emph{madelon}, and \emph{rcv1}. The first of these,
e2006-tfidf, is a sparse data set of dimensions \(16\,087 \times 150\,360\)
with a numeric response, to which we fit the ordinary lasso. The second two are
both data sets with a binary response, for which we use \(\ell_1\)-regularized
logistic regression. The dimensions of madelon are \(2000 \times 500\) and the
dimensions of rcv1 are \(20\,242 \times 47\,236\).

We study the contribution to the total running time per step, comparing the
Hessian screening rule with the working+ strategy. For the working+ strategy,
all time is spent inside the CD optimizer and in checks
of the KKT conditions. For the Hessian screening rule, time is also spent
updating the Hessian and computing the correlation estimate \(\tilde c^H\).

Beginning with \cref{fig:hesstimefrac-tfidf} we see that the Hessian
strategy dominates the Working+ strategy, which spends most of its
running time on coordinate descent iterations, which the Hessian strategy
ensures are completed in much less time.

\begin{figure}[htbp]
  \centering
  \includegraphics[]{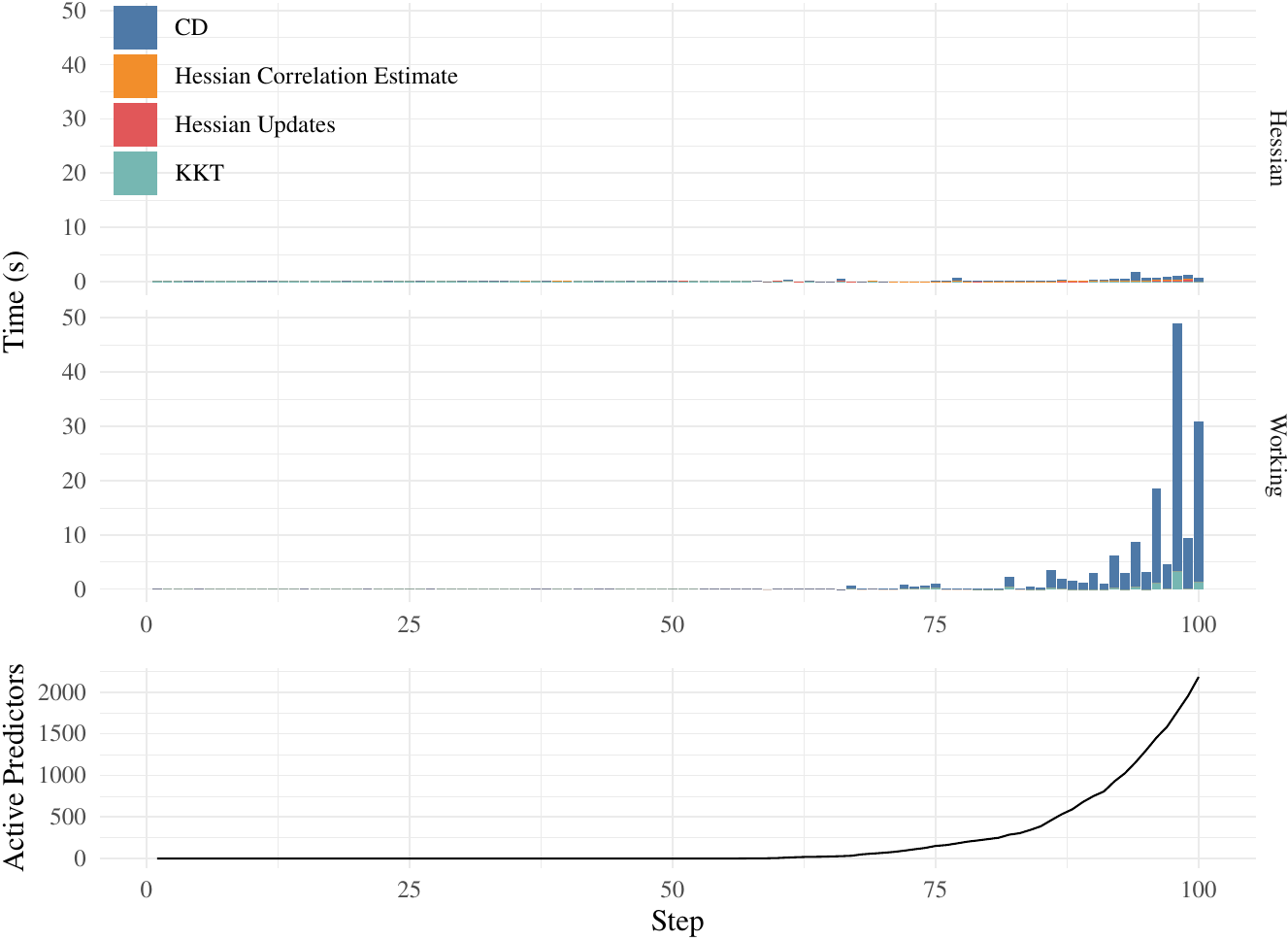}
  \caption{%
    Relative contribution to the full running time when fitting a complete
    regularization path to the \emph{e2006-tfidf} data
    set.\label{fig:hesstimefrac-tfidf}
  }
\end{figure}

In \cref{fig:hesstimefrac-madelon}, we see an example of
\(\ell_1\)-regularized logistic regression. In this case updating the Hessian
exactly (and directly) dominates the other approaches. The size of the problem
makes the cost of updating the Hessian negligible and offers improved screening
and warm starts, which in turn greatly reduces the time spent on coordinate
descent iterations and consequently the full time spent fitting the path.

\begin{figure}[htbp]
  \centering
  \includegraphics[]{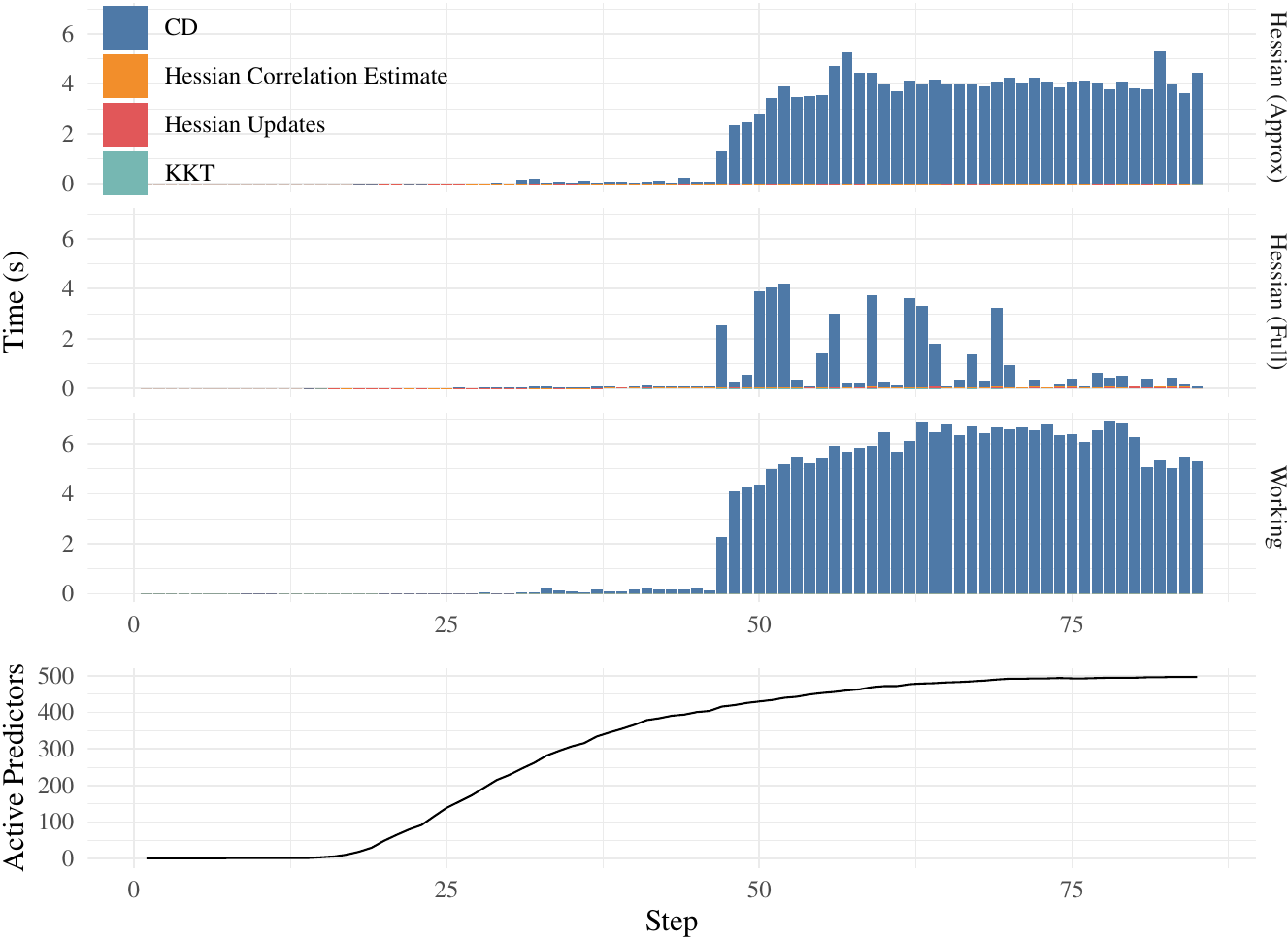}
  \caption{%
    Relative contribution to the full running time when fitting a complete
    regularization path to the \emph{madelon} data set.\label{fig:hesstimefrac-madelon}
  }
\end{figure}

Finally, in \cref{fig:hesstimefrac-rcv1} we consider the \emph{rcv1} data set.
In contrast to the case for \emph{madelon}, the cost of directly forming the
Hessian (and inverse) proves more time-consuming here (although the benefits
still show in the time spent on coordinate descent iterations).

\begin{figure}[htbp]
  \centering
  \includegraphics[]{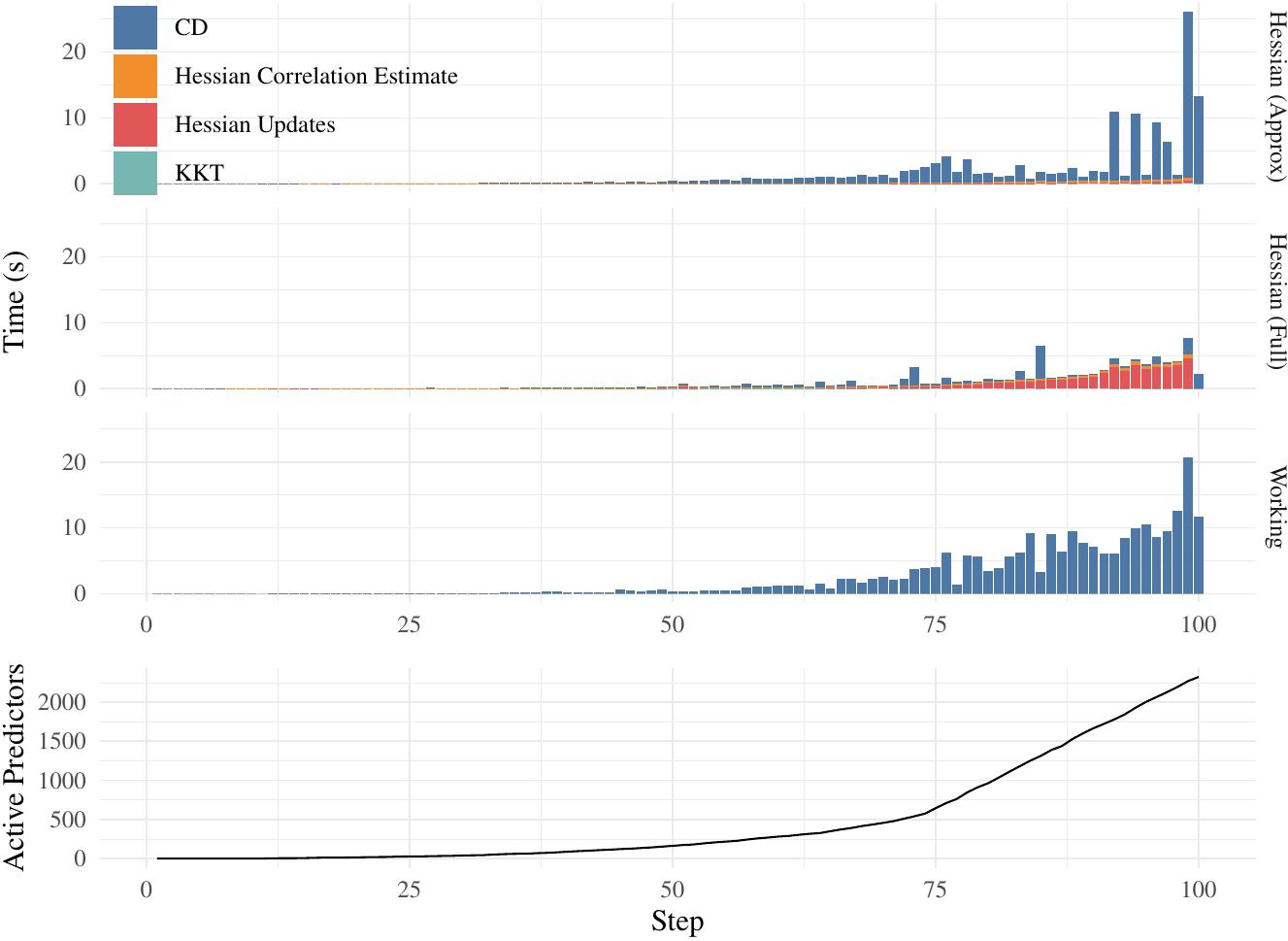}
  \caption{%
    Relative contribution to the full running time when fitting a complete
    regularization path to the \emph{rcv1} data set.
  }\label{fig:hesstimefrac-rcv1}
\end{figure}

As a final remark, note that the pattern by which predictors enter the model
(bottom panels) differ considerably between these three cases
(\cref{fig:hesstimefrac-tfidf,fig:hesstimefrac-madelon,fig:hesstimefrac-rcv1}).
Consider, for instance, \emph{madelon} viz-a-viz \emph{e2006-tfidf}. In
\emph{Approximate Homotopy} (\cref{sec:approximate-homotopy}, main paper),
we discuss a remedy for this solution that is readily available through our
method.

\end{document}